\pdfoutput=1
\documentclass[11pt]{article}

\usepackage[utf8]{inputenc}
\usepackage[T1]{fontenc}
\usepackage[english]{babel}
\usepackage{lmodern}
\usepackage{microtype}

\usepackage[right=1in, top=1in, bottom=1in, left=1in]{geometry}
\usepackage{setspace}
\usepackage{amsmath}
\usepackage{amssymb}
\usepackage{amsthm}
\usepackage{mathtools}
\usepackage{bbm}
\usepackage{bm}

\usepackage[dvipsnames]{xcolor}

\usepackage{graphicx}
\usepackage{tikz}
\usetikzlibrary{arrows.meta,positioning,calc}

\usepackage{booktabs}
\usepackage{array}
\usepackage{tabularx}

\usepackage{caption}
\usepackage{subcaption}

\usepackage[shortlabels]{enumitem}

\usepackage[authoryear,round]{natbib}
\usepackage[colorlinks=true, linkcolor=blue, citecolor=blue, hypertexnames=false]{hyperref}

\renewcommand{\arraystretch}{1.15}

\newcommand{\KL}{\mathrm{KL}}
\newcommand{\Ber}{\mathrm{Bern}}

\newtheorem{theorem}{Theorem}[section]
\newtheorem{proposition}[theorem]{Proposition}
\newtheorem{lemma}[theorem]{Lemma}
\newtheorem{corollary}[theorem]{Corollary}

\theoremstyle{definition}
\newtheorem{definition}[theorem]{Definition}

\theoremstyle{remark}
\newtheorem{remark}[theorem]{Remark}

\title{Bet on Features: Anytime-Valid and Feature-Aware Auditing of Conditional Quantile Forecasters}

\author{\begingroup
\setlength{\tabcolsep}{6pt}
\renewcommand{\arraystretch}{0.88}
\begin{tabular}{cc}
\begin{tabular}[t]{@{}c@{}}
Ivane Antonov\thanks{Equal contribution.}\\[-0.35ex]
{\footnotesize Julius-Maximilians-Universit\"at W\"urzburg}\\[-0.55ex]
{\footnotesize \texttt{ivane.antonov@uni-wuerzburg.de}}
\end{tabular}
&
\begin{tabular}[t]{@{}c@{}}
Sohom Mukherjee\footnotemark[1]\\[-0.35ex]
{\footnotesize Julius-Maximilians-Universit\"at W\"urzburg}\\[-0.55ex]
{\footnotesize \texttt{sohom.mukherjee@uni-wuerzburg.de}}
\end{tabular}
\\[5.0ex]
\begin{tabular}[t]{@{}c@{}}
Richard Pibernik\\[-0.35ex]
{\footnotesize Julius-Maximilians-Universit\"at W\"urzburg}\\[-0.55ex]
{\footnotesize Zaragoza Logistics Center}\\[-0.55ex]
{\footnotesize \texttt{richard.pibernik@uni-wuerzburg.de}}
\end{tabular}
&
\begin{tabular}[t]{@{}c@{}}
Yo Joong Choe\\[-0.35ex]
{\footnotesize INSEAD}\\[-0.55ex]
{\footnotesize \texttt{yojoong.choe@insead.edu}}
\end{tabular}
\end{tabular}
\endgroup
}
\date{}

\begin{document}

\maketitle

\begin{center}
\textbf{Abstract}
\end{center}
\vspace{-0.6em}

\begingroup
\setstretch{1.0}
\footnotesize
\noindent
Black-box conditional quantile forecasts are widely used for sequential decisions under asymmetric costs, such as inventory planning in supply chain management.
Once deployed, such forecasters must be monitored continuously as data streams drift and regimes change; this invalidates standard, fixed-horizon backtests for calibration.
Further, existing backtests do not take into account that the notion of calibration is, in fact, information-dependent: forecasts can look calibrated to an auditor with coarse information while being miscalibrated to an auditor with richer information.
We develop a distribution-free and game-theoretic testing framework for continuously auditing black-box conditional quantile forecasters with non-i.i.d.~losses, such that the resulting evidence process is powerful against predictably chosen alternatives specified by the features available to the auditor.
We first formalize notions of conditional quantile calibration when different sets of features are available to the auditor, establishing that the coarseness of the auditor's information set determines the hardness of the testing problem.
We then identify the sets of alternatives for which the auditor can achieve power, and focusing on contextual bets linear in the features, we derive finite-time detection guarantees for such alternatives, all without an i.i.d.~assumption.
The resulting evidence processes are interpretable at the feature level, as they quantify fine-grained, ``feature-aware'' evidence for miscalibration.
We empirically validate these methods on simulated and real data, finding that a popular time series forecaster (Chronos-2) is highly miscalibrated w.r.t.~multiple relevant features.
\par
\endgroup

\section{Introduction}
\label{sec:intro}
\begin{quote}
\small
\itshape
``Most of the literature implicitly assumes homoskedastic errors even when this is clearly violated, and proceed by merely testing for correct unconditional coverage. Consequently, I set out to build a consistent framework for conditional interval forecast evaluation.'' 
\par\hfill---\citet{christoffersen1998evaluating}
\end{quote}

In his seminal work, \citet{christoffersen1998evaluating} named the requirement that has informed forecast evaluation for nearly three decades: tests must be sensitive to conditional miscalibration, because failures that average out marginally can be predictable and exploitable in context. This perspective is increasingly relevant as large black-box models \citep{ansari2025chronos, liu2025moirai} are used for the probabilistic forecasting of covariate-rich time series in real-world applications~\citep[e.g.,][]{yang2025llm}. 
Modern deployments add a second statistical difficulty: forecasts need to be monitored continuously. Practitioners inspect calibration as outcomes arrive, stop once evidence accumulates, or intervene
after a suspected distribution shift \citep{hoga2023monitoring}. Fixed-horizon backtests are not
designed for this workflow and can lose their nominal Type-I guarantees under optional stopping. This necessitates incorporating ideas from the safe anytime-valid inference paradigm \citep{ville1939etude,shafer2011test,vovk2021evalues,ramdas2023game,grunwald2024safe}. An audit must, therefore, satisfy two requirements at once: it must be conditional in a sense rich enough for covariate-driven failures, and it must be anytime-valid.
In particular, for decision-making problems under asymmetric cost, e.g., inventory management \citep{cao2019quantile}, public health response \citep{doms2018assessing}, and financial risk \citep{engle2004caviar}, the relevant action is often
based on the conditional quantile (as opposed to the conditional mean) based on past history \citep{koenker1978regression}.
Persistent bias in the quantile forecasts (too high or too low relative to the conditional law) translates into downstream decision error, necessitating \emph{conditional quantile calibration}. 
In econometrics and risk management, this translates to the popular problem of backtesting value-at-risk~\citep{christoffersen2004backtesting}. 

Existing frameworks satisfy at most one of the above requirements.
In this paper, we develop an anytime-valid, distribution-free audit for black-box conditional quantile
forecasters, while making the auditor's information explicit in the calibration notion. This information-aware viewpoint of calibration is essential in operational settings. The forecaster may have access to rich internal context, while an external auditor may observe only a coarser view. For instance, a wholesaler auditing a retailer's demand forecasts may see aggregate
sales and past forecast errors, but not the retailer's promotion calendar. Conversely, an internal auditor
may have access to promotions, prices, calendar features, and lagged errors. These
situations can not be modelled by a single null hypothesis: the calibration property being tested,
and the alternatives against which the audit can have power, are determined by the information
available to the auditor.
Figure~\ref{fig:intro-placeholder} shows this gap on Rossmann store-sales data \citep{cukierski2015rossmann} using Chronos-2 \citep{ansari2025chronos} forecasts: marginal audits rarely reject, while promotion- and Saturday-aware audits reveal feature-specific miscalibration and accumulate anytime-valid evidence. 

Our framework formalizes this idea through a sequential betting game. At each time, the forecaster
issues a quantile forecast, the outcome is observed, and the auditor records whether the outcome fell
below the forecast. Before seeing the next outcome, the auditor chooses a bet using only its own
monitoring information. If the forecast is calibrated relative to that information, every legal betting
strategy produces a nonnegative test martingale, and hence an e-process. Thus large accumulated
wealth is valid evidence against calibration even under continuous monitoring and data-dependent
stopping \citep{ville1939etude,shafer2011test,vovk2021evalues,ramdas2023game,grunwald2024safe}.
The construction uses only issued forecasts and realized outcomes; it does not require a parametric
model, stationarity, independence, or moment assumptions on the outcomes.

This information-indexed view separates validity from power. Coarser audits are
safe, but they can be blind: a marginal audit may remain anytime-valid even when
the forecaster is systematically wrong in contexts visible only through features
the auditor does not observe. Thus the auditor's available information determines
both the calibration null being tested and the alternatives the audit can detect.
To gain power, the auditor must bet on predictable structure visible in its own
information. Scalar bets can detect simple aggregate bias, but feature-specific
errors require feature-aware bets. We therefore introduce contextual betting:
the auditor supplies a predictable feature dictionary---such as calendar
indicators, prices, promotions, lagged hits, or lagged forecasts---and an online
learner adapts the betting direction from the observed stream. The learned
weights are interpretable, identifying which features expose conditional
miscalibration. Theoretically, we show that if full-feature miscalibration has
persistent predictable edge along some linear feature direction, then any
contextual learner with a pathwise regret bound yields finite-time detection and
stopping-time guarantees. A more general cumulative-edge result in the appendix
allows time-varying and intermittent feature-aligned edge.

Our work is related to the literature on safe sequential inference for quantiles, forecast calibration,
elicitable functionals, and risk backtesting. We highlight the main connections in the following, and refer the reader to Appendix \ref{sec:related} for additional related work. \citet{howard2022sequential} construct time-uniform confidence sequences for
population quantiles from i.i.d. observations, while
\citet{mineiro2023timeuniform} build time- and value-uniform confidence bands for
CDFs under nonstationarity; both are estimation problems, whereas we audit a conditional quantile forecast and ask which features make its
errors detectable. \citet{henzi2022sequentialcalibration} develop sequentially valid tests
for probabilistic forecast calibration, closely related to PIT calibration, which
requires distributional forecasts rather than a single quantile. Methodologically,
\citet{casgrain2024sequential} provide a broad supermartingale framework for
elicitable and identifiable functionals, and our audit can be viewed as a
quantile-hit specialization; however, our null is indexed by the auditor's
monitoring information, leading to a study of feature-aware validity and power.
The e-backtesting framework of \citet{ebacktesting} is possibly the most closely related to our work:
their VaR e-statistic corresponds to one endpoint of our betting interval. The
main differences are that e-backtesting conditions on the full market information
filtration and studies power in i.i.d. settings, whereas we study information-indexed calibration and prove
finite-time power under non-i.i.d., history-dependent streams.
We summarize our major contributions in the following:
\begin{enumerate}[leftmargin=*]
    \item \textbf{Information-indexed anytime-valid calibration audits.}
    We formulate continuous auditing of black-box conditional quantile forecasts as a sequential
    testing-by-betting game indexed by the auditor's monitoring information
    (Section~\ref{sec:preliminaries}). Predictable no-bankruptcy bets generate test martingales, hence
    e-processes, that are valid under arbitrary non-i.i.d.~streams and optional stopping (Corrolary \ref{cor:anytime-valid-monitoring}). We prove a
    hierarchy of calibration nulls, show that validity transfers from coarser to richer nulls, and show
    that coarser valid audits can nevertheless be powerless against richer feature-conditional
    violations (Corr. \ref{cor:validity-transfer}, Prop. \ref{prop:coarser-tests-can-miss}).

    \item \textbf{Feature-aware contextual betting with finite-time power.}
    We develop contextual betting strategies that learn linear bets over a predictable feature
    dictionary available to the auditor (Section~\ref{sec:power_stopping_time}). For non-i.i.d.~alternatives with
    persistent feature-aligned predictable edge, a regret-to-power theorem converts any pathwise
    OCO regret bound into finite-time detection and stopping-time guarantees (Theorem \ref{thm:power-linear-full-feature}). A more general
    cumulative-edge alternative, allowing time-varying or intermittent edge, is analyzed in Appendix \ref{apx:missing_results} (Theorem \ref{thm:full-feature-regret-to-power}).

    \item \textbf{Empirical evidence for feature-specific miscalibration.}
    We validate our proposed feature-aware audit in three settings (Section~\ref{sec:experiments}). On a controlled two-state
    Markov hit process, we exhibit the marginal-versus-conditional null gap. Under a perfectly
    calibrated Negative-Binomial oracle, empirical rejection rates respect the Ville-prescribed test
    level across quantiles (Type-I validity). Replacing the oracle with Chronos-2 forecasts on synthetic and Rossmann
    store-sales data, feature-aware skeptics detect conditional miscalibration that marginal audits miss.
\end{enumerate}

\begin{figure}[t]
    \centering
    \begin{tikzpicture}
        \node[anchor=north, inner sep=0pt] (legend) at (0, 0) {%
            \includegraphics[width=0.78\textwidth]{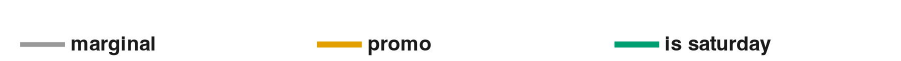}%
        };
        \node[anchor=south, inner sep=0pt, yshift=-10pt] (plots) at (legend.north) {%
            \begin{minipage}[t]{0.45\textwidth}
                \centering
                \includegraphics[width=\linewidth]{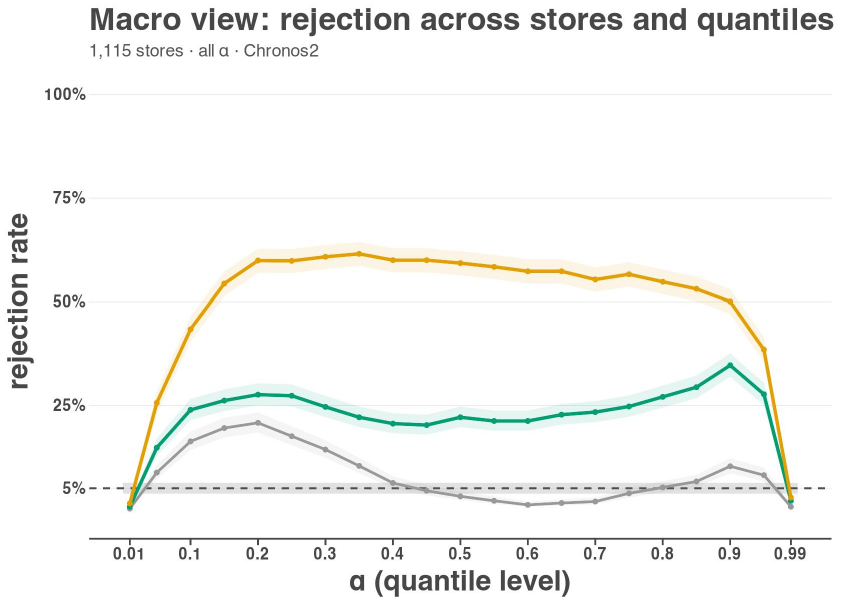}
            \end{minipage}\hfill
            \begin{minipage}[t]{0.45\textwidth}
                \centering
                \includegraphics[width=\linewidth]{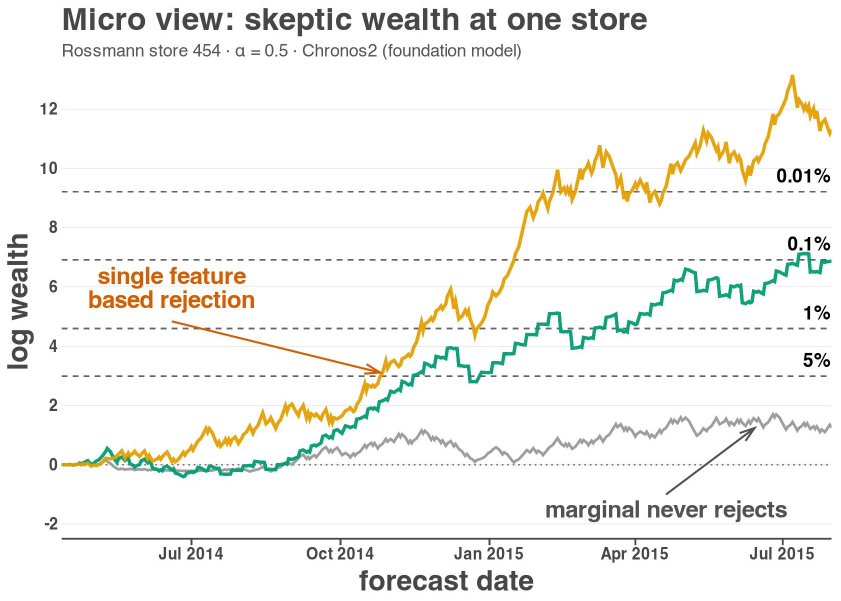}
            \end{minipage}%
        };
    \end{tikzpicture}
    \caption{
Chronos-2 quantile forecasts on Rossmann store-sales data can pass marginal
audits while failing feature-aware audits. \textbf{Left:} across stores and quantile
levels, marginal audits rarely reject, whereas \texttt{promotion}- and \texttt{Saturday}-aware
audits reveal substantial conditional miscalibration. \textbf{Right:} for one store,
marginal wealth stays nearly flat, while feature-aware wealth grows and crosses
anytime-valid rejection thresholds. 
}
    \label{fig:intro-placeholder}
\end{figure}

\section{Information-aware
anytime audits}
\label{sec:preliminaries}

\textbf{Operational audit with asymmetric information.}
We formalize conditional quantile forecast calibration auditing as a sequential testing-by-betting game. A central theme of our work is \emph{information asymmetry}. In operational forecasting, the forecaster may have access to rich
internal context (e.g., promotions, store identifiers,
recent demand), while the auditor may observe only a
coarser view. As motivated before, a wholesaler auditing a retailer's demand quantile
forecasts may see past forecast errors and aggregate sales, but not the
retailer's promotion flag. A coarser audit may fail
to detect miscalibration that is visible only through the presence of a missing feature. This
\textit{feature-aware} notion of evidence is the organizing
principle of the paper.

\textbf{Conditional quantile forecast calibration game.}
Let $(\Omega,\mathcal F,\mathbb F)${\color{gray}\,} be a filtered measurable space, with $\mathbb F=(\mathcal F_t)_{t\ge0}$. All laws considered below are probability measures on $(\Omega,\mathcal F)$. We index the filtration so that $\mathcal F_{t-1}$ is the richest pre-outcome information available before observing the outcome $Y_t$. For rounds $t=1,2,\ldots$, we have the following sequence of events. The covariates $C_t$ which is $\mathcal F_{t-1}$-measurable is revealed. For fixed $\alpha\in(0,1)$, the black-box forecaster issues a $\mathcal F_{t-1}$-measurable point forecast $\hat q_t \in \mathbb R$. The forecast is intended as a conditional $\alpha$-quantile forecast for $Y_t$. Finally, the outcome $Y_t$ is revealed, which is $\mathcal F_t$-measurable. 

Moreover, we will use a predictable feature dictionary
\(
    \phi_t\in\mathbb R^d, \mathcal F_{t-1}\text{-measurable},
\)
to denote the full collection of audit-relevant pre-outcome features (or analogously context), i.e., including $C_t$ and possibly further information like past forecasts and hit sequences.
A more nuanced formulation of the conditional quantile calibration game involving the \textit{forecaster}, the \textit{skeptic} (game-theoretic terminology for the auditor), and the \textit{nature} can be found in Appendix \ref{apx:game_protocol}.
To define the null hypothesis for conditional quantile calibration, we first set up some notations.
Define the uncentered hit and centered hit by
\(
    B_t:=\mathbf 1\{Y_t\le \widehat q_t\},
\) and
\(    Z_t:=B_t-\alpha.
\)
Since $\hat q_t$ is $\mathcal F_{t-1}$-measurable and $Y_t$ is $\mathcal F_t$-measurable, $Z_t$ is $\mathcal F_t$-measurable and takes values in $\{-\alpha,1-\alpha\}$.

\textbf{Monitoring information structure.}
What is crucial to our further discussion is the different information sets available to the auditor. A monitoring information structure is a sequence
$\mathbb H=(\mathcal H_t)_{t\ge0}$ such that, before the outcome at time $t$ is
observed,
\(
    \mathcal H_{t-1}\subseteq \mathcal F_{t-1},
\)
and after the outcome is observed the hit $Z_t$ is included in the monitored
information. 
We consider three monitoring levels. First, is the marginal information denoted by $\mathcal H_{t-1}^{\rm marg}$. This could include, for example, the past hit record or some aggregate thereof. Second, we have the single-feature (or scalar-feature) information denoted by $\mathcal H_{t-1}^{w}$. This could include, for example, a single coordinate $\phi_{t,j}$ of the covariate. Finally, we have the full-feature monitoring information denoted by $\mathcal H_{t-1}^{\phi}$. By construction,
\(
    \mathcal H_{t-1}^{\rm marg}
    \subseteq
    \mathcal H_{t-1}^{w}
    \subseteq
    \mathcal H_{t-1}^{\phi}
    \subseteq
    \mathcal F_{t-1},    
\)
for every $t$.

\begin{definition}[$\mathbb H$-conditional quantile calibration null\footnote{For a general conditional law, $\hat q_t$ is a valid conditional $\alpha$-quantile iff
$P(Y_t<\hat q_t\mid\mathcal H_{t-1})\le\alpha\le P(Y_t\le\hat q_t\mid\mathcal H_{t-1})$ $P$-a.s. The exact conditional hit coverage null above (Definition 2.1) is a stronger requirement. The two notions coincide under the no-atom condition $P(Y_t=\hat q_t\mid\mathcal H_{t-1})=0$ $P$-a.s. for every $t$.}]
\label{def:strongnull}
For any monitoring information structure $\mathbb H$, define the information-indexed conditional quantile calibration null as the composite class
\(
    \mathcal P_0(\mathbb H)
    :=
    \left\{
        P:\;
        P(Y_t\le \widehat q_t\mid\mathcal H_{t-1})=\alpha,\, P\text{-a.s., for every }t\ge1
    \right\}.
    \label{eq:strongnull}
\)
Equivalently, $P\in\mathcal P_0(\mathbb H)$ iff $\mathbb E_P[Z_t\mid \mathcal H_{t-1}]=0$, $P$-a.s., for every $t\ge1$.
\end{definition}

The equivalence is due to the bounded martingale difference sequence (MDS) property of $(Z_t)_{t\ge1}$ with respect to $\mathbb H$, and formally proved in Proposition \ref{prop:bounded-monitored-mds}.
Thus $\mathcal P_0^{\rm marg}
    :=
    \mathcal P_0(\mathbb H^{\rm marg})$ is marginal sequential calibration,
$\mathcal P_0^{w}
    :=
    \mathcal P_0(\mathbb H^{w})$ is calibration with respect to one scalar feature, and $\mathcal P_0^{\phi}
    :=
    \mathcal P_0(\mathbb H^{\phi})$ is calibration with respect to the full feature
dictionary. The nulls follow a hierarchy as outlined in the following Proposition.

\begin{proposition}[Null hierarchy]
\label{prop:null-hierarchy}
Let $\mathbb H$ and $\mathbb G$ be two monitoring information structures such
that
\(
    \mathcal H_{t-1}\subseteq \mathcal G_{t-1},
\)
for every $t\ge1$. Then
\(
    \mathcal P_0(\mathbb G)
    \subseteq
    \mathcal P_0(\mathbb H).
\)
Consequently,
\(
    \mathcal P_0^\phi
    \subseteq
    \mathcal P_0^w
    \subseteq
    \mathcal P_0^{\rm marg}.
\)
Thus full-feature calibration is the strongest requirement and marginal
calibration is the weakest requirement.
\end{proposition}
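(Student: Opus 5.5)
The plan is to use the tower property of conditional expectation, working with the equivalent centered form of the null in Definition~\ref{def:strongnull}. Fix $P\in\mathcal P_0(\mathbb G)$, so that $\mathbb E_P[Z_t\mid\mathcal G_{t-1}]=0$ $P$-a.s.\ for every $t\ge1$. The goal is to show $\mathbb E_P[Z_t\mid\mathcal H_{t-1}]=0$ $P$-a.s.\ for every $t$, which gives $P\in\mathcal P_0(\mathbb H)$.

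The key step is as follows. Since $Z_t\in\{-\alpha,1-\alpha\}$ is bounded, it is integrable, so all conditional expectations exist and are a.s.\ unique. Because $\mathcal H_{t-1}\subseteq\mathcal G_{t-1}$, the tower property gives
\[
\mathbb E_P[Z_t\mid\mathcal H_{t-1}]
=\mathbb E_P\bigl[\mathbb E_P[Z_t\mid\mathcal G_{t-1}]\bigm|\mathcal H_{t-1}\bigr]
=\mathbb E_P[0\mid\mathcal H_{t-1}]=0\quad P\text{-a.s.}
\]
Equivalently, in the uncentered form, $P(Y_t\le\hat q_t\mid\mathcal H_{t-1})=\mathbb E_P[B_t\mid\mathcal H_{t-1}]=\mathbb E_P[\alpha\mid\mathcal H_{t-1}]=\alpha$. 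Since $t$ was arbitrary, $P\in\mathcal P_0(\mathbb H)$, which proves the first inclusion.

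The chain $\mathcal P_0^\phi\subseteq\mathcal P_0^w\subseteq\mathcal P_0^{\rm marg}$ then follows by applying the inclusion twice. The first application uses $\mathcal H^{w}_{t-1}\subseteq\mathcal H^\phi_{t-1}$, and the second uses $\mathcal H^{\rm marg}_{t-1}\subseteq\mathcal H^w_{t-1}$; both inclusions are the nesting assumed by construction. The interpretive sentence of the statement is simply a restatement of this ordering.

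There is no real obstacle here. The only point needing care is that "a.s." equalities of conditional expectations are preserved under further conditioning, and that the constant $0$ (or $\alpha$) is measurable with respect to any $\sigma$-field. Both facts are standard, so the argument is essentially one line.
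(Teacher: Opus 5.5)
Your proof is correct and takes essentially the same approach as the paper: fix $P\in\mathcal P_0(\mathbb G)$, use the tower property with $\mathcal H_{t-1}\subseteq\mathcal G_{t-1}$ to get $\mathbb E_P[Z_t\mid\mathcal H_{t-1}]=0$, and then apply the inclusion along $\mathbb H^{\rm marg}\subseteq\mathbb H^{w}\subseteq\mathbb H^{\phi}$. Your added remarks on integrability of the bounded $Z_t$ and on the uncentered form are harmless extras.
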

In particular, if we reject the single-feature calibration null $\mathcal{P}_0^w$, w.r.t.~\emph{any} feature, then we readily reject the full-feature calibration null $\mathcal{P}_0^\phi$ (see Figure~\ref{fig:null-hierarchy}). 
Moreover, rejecting $\mathcal{P}_0^w$ for a particular feature would give us strictly more information, in that we additionally know conditional on which feature the forecaster is miscalibrated. 
We can view the testing problem studied in existing work~\citep[e.g.,][]{casgrain2024sequential,ebacktesting} as the case where $\mathcal{H}_t \equiv \mathcal{F}_t,\, \forall t$. 
The corresponding null hypothesis is at least as small as $\mathcal{P}_0^\phi$, and thus it is an easier testing problem for which there are more possibilities for rejection. 

\begin{figure}[t]
    \centering
    \begin{subfigure}[t]{0.48\textwidth}
        \centering
        \includegraphics[width=\linewidth]{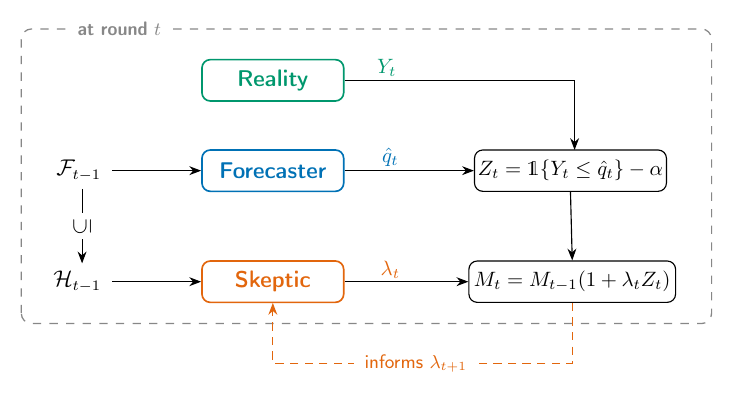}
        \caption{One game round (Def.~\ref{def:filtration}): Forecaster announces $\hat q_t \in \mathcal F_{t-1}$; Reality reveals $Y_t$; Skeptic, with information $\mathcal H_{t-1}\subseteq\mathcal F_{t-1}$, bets $\lambda_t\in\Lambda_\alpha$ on hit $Z_t=\mathbbm 1\{Y_t\le\hat q_t\}-\alpha$, and wealth updates as $M_t=M_{t-1}(1+\lambda_t Z_t)$.}
        \label{fig:game-protocol}
    \end{subfigure}
    \hfill
    \begin{subfigure}[t]{0.48\textwidth}
        \centering
        \includegraphics[width=\linewidth]{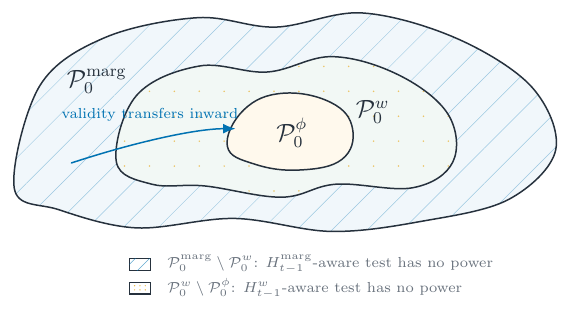}
        \caption{Hierarchy of calibration nulls (Proposition~\ref{prop:null-hierarchy}). Refining the monitoring information structure shrinks the null: $\mathcal P_0^\phi \subseteq \mathcal P_0^w \subseteq \mathcal P_0^{\rm marg}$. Coarser information tests might fail to gain power (Proposition \ref{prop:coarser-tests-can-miss}).}
        \label{fig:null-hierarchy}
    \end{subfigure}
    \caption{Left: one round of the conditional quantile calibration game with information asymmetry and feature-aware bets. Right: the induced hierarchy of nulls, indexed by the monitoring sub-filtration.}
    \label{fig:game-and-nulls}
\end{figure}
\textbf{Test supermartingales and e-processes.}
We now recall the notions of sequential testing needed to convert the bounded MDS property above into anytime-valid evidence. The following definitions follow the modern e-value and e-process literature 
\citep{vovk2021evalues,ramdas2022testing,ramdas2023game,grunwald2024safe,waudbysmith2024testing,ramdas2025ebook}.

\begin{definition}[Test supermartingales and e-processes]
\label{def:eprocess}
\label{def:test-supermartingale-eprocess}
A nonnegative, $\mathbb H$-adapted process $(S_t)_{t\ge0}$ is a test supermartingale for $\mathcal P_0(\mathbb H)$ if $S_0\le1$ and, for every $P\in\mathcal P_0(\mathbb H)$, $\mathbb E_P[S_t\mid\mathcal H_{t-1}]\le S_{t-1}$, $P$-a.s., for all $t\ge1$. 
An e-process for $\mathcal P_0(\mathbb H)$ is a nonnegative, $\mathbb H$-adapted process $(M_t)_{t\ge0}$ such that, for every $P\in\mathcal P_0(\mathbb H)$, there exists a test supermartingale $(S_t^P)_{t\ge0}$ with $M_t\le S_t^P$ $P$-a.s. for all $t\ge0$. Every test supermartingale is therefore an e-process.
\end{definition}

\begin{definition}[Skeptic's wealth process and betting domain]
\label{def:skeptic-wealth-process}
An
$\mathbb H$-predictable betting strategy is a sequence
$(\lambda_t)_{t\ge1}$ such that $\lambda_t$ is
$\mathcal H_{t-1}$-measurable and takes values in $\Lambda_\alpha=[-1/(1-\alpha),\,1/\alpha]$. The wealth process for a skeptic with such bets is \(M_t:=\prod_{i=1}^t(1+\lambda_i Z_i)\), where $M_0:=1$.
\end{definition}

The domain $\Lambda_\alpha$ is the maximal no-bankruptcy interval, i.e., the largest interval ensuring \(1+\lambda_tZ_t\ge0\). Since $Z_t\in\{-\alpha,1-\alpha\}$, nonnegativity is equivalent to $1+\lambda(1-\alpha)\ge0$ and $1-\alpha\lambda\ge0$, which simplifies to $\Lambda_\alpha$. Under the null, these multipliers have conditional mean one: for every predictable $\lambda_t$ taking values in $\Lambda_\alpha$ and every $P\in\mathcal P_0(\mathbb H)$, $\mathbb E_P[1+\lambda_tZ_t\mid\mathcal H_{t-1}]=1+\lambda_t\mathbb E_P[Z_t\mid\mathcal H_{t-1}]=1$. Hence $E_t:=1+\lambda_tZ_t$ is a conditional e-variable (Definition \ref{def:conditional-evariables}), and the product wealth process is a test martingale under every null law. 
We formally state this well-known construction~\citep[e.g.,][]{waudbysmith2024testing} adapted to this problem as a Theorem below; by Ville's inequality~\citep{ville1939etude}, we immediately have an anytime-valid test. 

\begin{theorem}[Validity of predictable betting under a monitoring null]
\label{thm:validity-monitoring}
Let $(\lambda_t)_{t\ge1}$ be any $\mathbb H$-predictable betting strategy taking
values in $\Lambda_\alpha$, define $(M_t)$ as in Def. \ref{def:skeptic-wealth-process}. If $P\in\mathcal P_0(\mathbb H)$, then
$(M_t)_{t\ge0}$ is a nonnegative $P$-martingale with respect to the monitoring
filtration $\mathbb H$. Consequently, $(M_t)$ is a test martingale, and hence an
e-process, for $\mathcal P_0(\mathbb H)$.
\end{theorem}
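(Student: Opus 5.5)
The plan is to check, in order, the defining properties of a test martingale for $(M_t)$ under an arbitrary fixed $P\in\mathcal P_0(\mathbb H)$. Then I conclude with Definition~\ref{def:test-supermartingale-eprocess}, taking $S^P_t:=M_t$.

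\emph{Step 1 (adaptedness and nonnegativity).} By Definition~\ref{def:skeptic-wealth-process}, $\lambda_i$ is $\mathcal H_{i-1}$-measurable. The monitoring structure includes the hit $Z_i$ in the post-outcome information, so $Z_i$ is $\mathcal H_i$-measurable. Since $\mathcal H_{i-1}\subseteq\mathcal H_i$, each factor $1+\lambda_iZ_i$ is $\mathcal H_i$-measurable. Hence $M_t$ is $\mathcal H_t$-measurable. For nonnegativity, $\lambda_i\in\Lambda_\alpha$ and $Z_i\in\{-\alpha,1-\alpha\}$. This gives $1+\lambda_i(1-\alpha)\ge 0$ and $1-\lambda_i\alpha\ge0$, so every factor is nonnegative and $M_t\ge0$. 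Also $M_0=1$.

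\emph{Step 2 (integrability).} Every factor is bounded. Specifically, $|\lambda_iZ_i|\le \max\{1/\alpha,1/(1-\alpha)\}\cdot\max\{\alpha,1-\alpha\}$, a finite constant $c_\alpha$. So $0\le M_t\le (1+c_\alpha)^t$, and each $M_t$ is $P$-integrable. Conditional expectations are therefore well defined, and we may pull out bounded measurable factors.

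\emph{Step 3 (martingale property).} Write $M_t=M_{t-1}(1+\lambda_tZ_t)$. Here $M_{t-1}$ and $\lambda_t$ are bounded and $\mathcal H_{t-1}$-measurable. Taking them out of the conditional expectation gives
\[
\mathbb E_P[M_t\mid\mathcal H_{t-1}]=M_{t-1}\bigl(1+\lambda_t\,\mathbb E_P[Z_t\mid\mathcal H_{t-1}]\bigr)=M_{t-1},
\]
using the equivalent form of Definition~\ref{def:strongnull}, $\mathbb E_P[Z_t\mid\mathcal H_{t-1}]=0$ $P$-a.s. (justified by Proposition~\ref{prop:bounded-monitored-mds}). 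This holds for every $P$ in the composite null with the same process $M_t$. So $(M_t)$ is a test martingale, and in particular a test supermartingale, for $\mathcal P_0(\mathbb H)$. Choosing $S^P_t=M_t$ for each $P$ shows it is an e-process.

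The only delicate point is Step 1's measurability claim. The argument needs $Z_t$ to be $\mathcal H_t$-measurable, not merely $\mathcal F_t$-measurable, and it needs $\mathbb H$ to be nested. Nesting is implicit in calling $\mathbb H$ a filtration; I would state it as the interpretation of a monitoring information structure. Everything else is routine.
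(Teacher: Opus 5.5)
Your proposal is correct and follows essentially the same route as the paper's proof: nonnegativity from the no-bankruptcy interval $\Lambda_\alpha$, adaptedness from $\mathbb H$-predictability of $\lambda_t$ and $\mathcal H_t$-measurability of $Z_t$, integrability from bounded multipliers, and the one-line conditional-expectation computation using $\mathbb E_P[Z_t\mid\mathcal H_{t-1}]=0$. Your explicit bound $0\le M_t\le(1+c_\alpha)^t$ and the explicit choice $S^P_t=M_t$ only spell out what the paper leaves as ``by induction'' and ``hence an e-process.''
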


\begin{corollary}[Ville's inequality]
\label{cor:anytime-valid-monitoring}
Under the assumptions of Theorem~\ref{thm:validity-monitoring}, for every
$P\in\mathcal P_0(\mathbb H)$ and every $c>0$,
\(
    P\!\left(\sup_{t\ge0}M_t\ge c\right)
    \le
    \frac1c.
\)
Equivalently, for any $\gamma\in(0,1)$, the rejection time $\tau_\gamma:=\inf\{t\ge1:M_t\ge1/\gamma\}$, with $\inf\emptyset:=\infty$, satisfies
\(
    \sup_{P\in\mathcal P_0(\mathbb H)}
    P(\tau_\gamma<\infty)
    \le
    \gamma .
\)
\end{corollary}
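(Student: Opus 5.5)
The plan is to derive both claims directly from Theorem~\ref{thm:validity-monitoring}, with Ville's maximal inequality for nonnegative supermartingales as the main tool. Fix $P\in\mathcal P_0(\mathbb H)$. By Theorem~\ref{thm:validity-monitoring}, $(M_t)_{t\ge0}$ is a nonnegative $P$-martingale with respect to $\mathbb H$, with $M_0=1$. For $c\le 1$ the bound $P(\sup_t M_t\ge c)\le 1/c$ is trivial, so I would assume $c>1$.

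The argument proceeds in three steps.

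\textbf{Finite horizon.} Fix a horizon $T$ and set $\sigma:=\inf\{t\ge0:M_t\ge c\}\wedge T$, which is a bounded $\mathbb H$-stopping time. Optional stopping for bounded stopping times gives $\mathbb E_P[M_\sigma]=\mathbb E_P[M_0]=1$. On the event $A_T:=\{\max_{t\le T}M_t\ge c\}$ we have $M_\sigma\ge c$. Using nonnegativity,
\[
  c\,P(A_T)\le \mathbb E_P[M_\sigma\mathbf 1_{A_T}]\le \mathbb E_P[M_\sigma]=1 .
\]

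\textbf{Infinite horizon.} The events $A_T$ increase to $\{\sup_{t\ge0}M_t\ge c\}$ as $T\to\infty$, with the supremum taken over integer $t$. Continuity from below then yields $P(\sup_t M_t\ge c)\le 1/c$. This limiting step is the only delicate point: I need the non-strict inequality $\ge c$ to be preserved. It is, because the supremum over a countable index set attaining $\ge c$ means some finite $t$ has $M_t\ge c$? Not quite, since a supremum can equal $c$ without being attained. To avoid this issue I would first prove the bound for the event $\{\exists t: M_t\ge c\}$, which is exactly $\bigcup_T A_T$. I would then handle $\sup_t M_t\ge c$ by applying the result at level $c'<c$ and letting $c'\uparrow c$, giving $P(\sup_t M_t\ge c)\le \inf_{c'<c}1/c'=1/c$.

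\textbf{Stopping-time reformulation.} With $c=1/\gamma$, we have $\{\tau_\gamma<\infty\}=\{\exists t\ge1: M_t\ge 1/\gamma\}\subseteq\{\sup_t M_t\ge 1/\gamma\}$. Hence $P(\tau_\gamma<\infty)\le\gamma$. Since $P\in\mathcal P_0(\mathbb H)$ was arbitrary, and the strategy $(\lambda_t)$ and the process $(M_t)$ do not depend on $P$, taking the supremum over the composite null gives the uniform bound.

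I expect no real obstacle beyond the bookkeeping at the $\sup$/attainment boundary handled above. All the substance, namely the martingale property under every null law, is already supplied by Theorem~\ref{thm:validity-monitoring}.
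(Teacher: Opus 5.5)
Your proof is correct and follows the same route as the paper. Theorem~\ref{thm:validity-monitoring} gives a nonnegative $P$-martingale with $M_0=1$ under each null law, Ville's inequality bounds the crossing probability, and the inclusion $\{\tau_\gamma<\infty\}\subseteq\{\sup_t M_t\ge 1/\gamma\}$ yields the uniform bound over $\mathcal P_0(\mathbb H)$. The differences are minor: you derive Ville's inequality from optional stopping at bounded times, with the $c'\uparrow c$ step handling non-attained suprema, while the paper simply cites it; and your $\subseteq$ is more careful than the paper's stated equality $\{\tau_\gamma<\infty\}=\{\sup_{t\ge1}M_t\ge1/\gamma\}$.
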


\textbf{Validity transfers inward while power is lost under coarsening.}
For our problem, we are particularly interested in how the validity of tests can transfer from coarser (larger) to richer (smaller) nulls. Intuitively, this is true because Type I error (or equivalently, e-process validity) holding uniformly over a null set also holds uniformly over its subset. This is formalized in the following Corollary.

\begin{corollary}[Validity transfers from coarser to richer nulls]
\label{cor:validity-transfer}
Suppose that two monitoring filtrations
$\mathbb H=(\mathcal H_t)_{t\ge0}$ and
$\mathbb G=(\mathcal G_t)_{t\ge0}$ satisfy
\(
    \mathcal H_{t-1}\subseteq\mathcal G_{t-1}
\)
for every $t$.
Then every $\mathbb H$-predictable betting process that is valid for the larger
null $\mathcal P_0(\mathbb H)$ is automatically valid for the smaller, richer
null $\mathcal P_0(\mathbb G)$.
In particular,
\(
    \text{marginal-valid tests are valid under }\mathcal P_0^w
    \text{ and }\mathcal P_0^\phi,
\)
and
\(
    \text{one-feature-valid tests are valid under }\mathcal P_0^\phi.
\)
\end{corollary}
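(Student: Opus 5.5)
The argument is short: the uniform guarantee over the larger null restricts to the smaller null, and Proposition~\ref{prop:null-hierarchy} supplies the needed inclusion. Fix an $\mathbb H$-predictable strategy $(\lambda_t)$ with values in $\Lambda_\alpha$, and let $(M_t)$ be its wealth process from Definition~\ref{def:skeptic-wealth-process}. Call such a process \emph{valid} for a null $\mathcal P_0$ if it is an e-process for $\mathcal P_0$, and hence has Type-I control $\sup_{P\in\mathcal P_0}P(\tau_\gamma<\infty)\le\gamma$ via Corollary~\ref{cor:anytime-valid-monitoring}.

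First, apply Proposition~\ref{prop:null-hierarchy} with $\mathcal H_{t-1}\subseteq\mathcal G_{t-1}$ to get $\mathcal P_0(\mathbb G)\subseteq\mathcal P_0(\mathbb H)$. For the reader's benefit I will recall the one-line reason. If $P\in\mathcal P_0(\mathbb G)$, the tower property gives
\[
\mathbb E_P[Z_t\mid\mathcal H_{t-1}]=\mathbb E_P\bigl[\mathbb E_P[Z_t\mid\mathcal G_{t-1}]\mid\mathcal H_{t-1}\bigr]=0 .
\]

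Second, take any $P\in\mathcal P_0(\mathbb G)$. Since $P$ also lies in $\mathcal P_0(\mathbb H)$, Theorem~\ref{thm:validity-monitoring} applies. It shows that $(M_t)$ is a nonnegative $P$-martingale with respect to $\mathbb H$ with $M_0=1$. This gives the test supermartingale required by Definition~\ref{def:eprocess}, with $S^P=M$. Because $P$ was arbitrary in $\mathcal P_0(\mathbb G)$, $(M_t)$ is an e-process for $\mathcal P_0(\mathbb G)$. Ville's inequality then gives
\[
\sup_{P\in\mathcal P_0(\mathbb G)}P(\tau_\gamma<\infty)\le\sup_{P\in\mathcal P_0(\mathbb H)}P(\tau_\gamma<\infty)\le\gamma .
\]

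The only point needing care is the filtration. Definition~\ref{def:eprocess} indexes adaptedness and the supermartingale property by the filtration attached to the null. So I must check that the $\mathbb H$-martingale property also holds with respect to $\mathbb G$. It does, and directly, without any tower argument. Since $\mathcal H_{t-1}\subseteq\mathcal G_{t-1}$, the strategy is $\mathbb G$-predictable: $\lambda_t$ is $\mathcal G_{t-1}$-measurable. The process $(M_t)$ is also $\mathbb G$-adapted, because $Z_t$ is included in the monitored information at time $t$ under both structures. Therefore
\[
\mathbb E_P[M_t\mid\mathcal G_{t-1}]=M_{t-1}\bigl(1+\lambda_t\,\mathbb E_P[Z_t\mid\mathcal G_{t-1}]\bigr)=M_{t-1}.
\]
This holds for every $P\in\mathcal P_0(\mathbb G)$, so $(M_t)$ is a test martingale in the sense of Definition~\ref{def:eprocess} relative to $\mathbb G$ as well.

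The two special cases then follow by instantiating $(\mathbb H,\mathbb G)$ with the chain $\mathcal H^{\rm marg}_{t-1}\subseteq\mathcal H^{w}_{t-1}\subseteq\mathcal H^{\phi}_{t-1}$. Marginal-valid tests are valid under $\mathcal P_0^w$ and $\mathcal P_0^\phi$, and one-feature-valid tests are valid under $\mathcal P_0^\phi$.
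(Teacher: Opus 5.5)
Your proof is correct, and its decisive step (noting that $\lambda_t$ is $\mathcal G_{t-1}$-measurable and computing $\mathbb E_P[M_t\mid\mathcal G_{t-1}]=M_{t-1}\bigl(1+\lambda_t\,\mathbb E_P[Z_t\mid\mathcal G_{t-1}]\bigr)=M_{t-1}$ for $P\in\mathcal P_0(\mathbb G)$, then instantiating the chain $\mathcal H^{\rm marg}\subseteq\mathcal H^{w}\subseteq\mathcal H^{\phi}$) is exactly the paper's proof. The preliminary detour through Proposition~\ref{prop:null-hierarchy} and Theorem~\ref{thm:validity-monitoring} does not appear in the paper's proof. As you correctly observe, that detour already yields the Ville Type-I bound over $\mathcal P_0(\mathbb G)$ but only an $\mathbb H$-martingale, so the direct $\mathbb G$-conditional computation is what establishes the filtration-indexed e-process property.
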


While the previous corollary shows validity, it does not guarantee anything about the power of the coarser information test against violations of the richer null. A coarser test can remain perfectly valid under a richer null while being blind to its violations. The next proposition formalizes this point.

\begin{proposition}[Coarser valid tests can miss richer-null violations]
\label{prop:coarser-tests-can-miss}
Let $\mathbb H$ and $\mathbb G$ satisfy
$\mathcal H_{t-1}\subseteq\mathcal G_{t-1}$ for every $t$. Suppose a law $Q$
satisfies
\(
    Q\in\mathcal P_0(\mathbb H)
    \setminus
    \mathcal P_0(\mathbb G).
\)
Then $Q$ violates the richer $\mathbb G$-conditional calibration null, but
every $\mathbb H$-predictable no-bankruptcy wealth process remains a
nonnegative martingale under $Q$. Consequently, for every
$\gamma\in(0,1)$,
\(
    Q(\tau_\gamma<\infty)\le\gamma,
\)
where $\tau_\gamma=\inf\{t\ge1:M_t\ge1/\gamma\}$. Moreover, for every finite
horizon $T$,
\[
    \mathbb E_Q[\log M_T]\le0,
\]
i.e., a coarser auditor has no positive systematic log-growth under $Q$, even though $Q$ violates calibration at the richer information level.
\end{proposition}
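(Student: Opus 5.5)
The plan is to reduce everything to Theorem~\ref{thm:validity-monitoring} applied with the coarser structure $\mathbb H$. First, I would observe that $Q\in\mathcal P_0(\mathbb H)\setminus\mathcal P_0(\mathbb G)$ means two things. By Definition~\ref{def:strongnull}, $\mathbb E_Q[Z_t\mid\mathcal H_{t-1}]=0$ $Q$-a.s.\ for every $t$. At the same time, for some $t$ the event $\{\mathbb E_Q[Z_t\mid\mathcal G_{t-1}]\neq0\}$ has positive $Q$-probability; this is exactly the violation of the richer null.

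Second, fix any $\mathbb H$-predictable $(\lambda_t)$ with values in $\Lambda_\alpha$. Since $Q\in\mathcal P_0(\mathbb H)$, Theorem~\ref{thm:validity-monitoring} applies verbatim and gives that $(M_t)$ is a nonnegative $Q$-martingale with respect to $\mathbb H$. Concretely, $\mathbb E_Q[1+\lambda_tZ_t\mid\mathcal H_{t-1}]=1+\lambda_t\,\mathbb E_Q[Z_t\mid\mathcal H_{t-1}]=1$. Pulling out $\lambda_t$ is legitimate because it is $\mathcal H_{t-1}$-measurable and bounded, and nonnegativity follows from the choice of $\Lambda_\alpha$. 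The point to stress is that the skeptic's bets never see $\mathcal G_{t-1}\setminus\mathcal H_{t-1}$, so the $\mathbb G$-conditional bias of $Z_t$ is averaged out before it can be exploited. Ville's inequality, Corollary~\ref{cor:anytime-valid-monitoring}, then gives $Q(\sup_t M_t\ge1/\gamma)\le\gamma$, and hence $Q(\tau_\gamma<\infty)\le\gamma$. This step does not need $Q$ to lie in the whole class in the $\sup$; it only needs the martingale property under $Q$.

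Third, for the log-growth claim at a finite horizon $T$, I would use Jensen's inequality for the concave function $\log$, allowing the value $-\infty$ when $M_T=0$. This gives $\mathbb E_Q[\log M_T]\le\log\mathbb E_Q[M_T]=\log M_0=0$. Here $\mathbb E_Q[M_T]=1$ follows from the martingale property by iterating conditional expectations. An alternative per-step route is $\mathbb E_Q[\log(1+\lambda_tZ_t)\mid\mathcal H_{t-1}]\le\log\mathbb E_Q[1+\lambda_tZ_t\mid\mathcal H_{t-1}]=0$, summed over $t$ via the tower property.

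The main obstacle is the integrability bookkeeping in this last step. The quantity $\log M_T$ is bounded above, because $1+\lambda_tZ_t\le 1+\max(1/\alpha\cdot(1-\alpha),\,\alpha/(1-\alpha))$ and so $M_T$ is bounded by a deterministic constant. However, $\log M_T$ may equal $-\infty$ with positive probability when the skeptic bets at an endpoint of $\Lambda_\alpha$. I would handle this by noting that $\mathbb E_Q[\log M_T]$ is well defined in $[-\infty,\infty)$, since its positive part is bounded. Jensen's inequality then holds in this extended sense, for instance by applying it to $\log(M_T\vee\varepsilon)$ and letting $\varepsilon\downarrow0$ with monotone convergence.
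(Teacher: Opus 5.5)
Your proposal is correct and follows essentially the same argument as the paper: invoke Theorem~\ref{thm:validity-monitoring} under $Q\in\mathcal P_0(\mathbb H)$, apply Ville's inequality, and bound the log-wealth by Jensen. The paper takes your alternative per-step route, summing $\mathbb E_Q[\log(1+\lambda_tZ_t)\mid\mathcal H_{t-1}]\le 0$ over $t$, rather than the global bound $\mathbb E_Q[\log M_T]\le\log\mathbb E_Q[M_T]=0$. It also does not address the possible value $-\infty$, which your truncation argument handles.
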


\section{Feature-aware power and stopping times}
\label{sec:power_stopping_time}

The preceding section established the validity aspect of the audit: for any
monitoring information structure $\mathbb H$, every $\mathbb H$-predictable
no-bankruptcy betting strategy yields an anytime-valid e-process for
$\mathcal P_0(\mathbb H)$, with no distributional assumptions. We also noted that power of a test is information-sensitive: as Proposition~\ref{prop:coarser-tests-can-miss} shows, a skeptic using a
coarser information set can remain perfectly valid while having no positive
systematic log-growth against violations that are visible only after conditioning
on richer features. Equivalently, wealth grows only when
the chosen bets align with conditional deviations visible through the skeptic's
own information.

This section characterizes alternatives when a full-feature skeptic has finite-time power. An
alternative law $Q$ may be nonstationary and history-dependent; what matters is
whether the full-feature conditional miscalibration
\(
    m_t^\phi
    :=
    \mathbb E_Q[Z_t\mid\mathcal H_{t-1}^\phi]
\)
has a component that can be exploited by predictable feature-aware bets. If the
conditional hit probability were known, the skeptic could use the Kelly-optimal
stake that maximizes conditional expected log-growth. In practice this
probability is unknown, and the relevant miscalibration may be a feature-specific
pattern rather than a scalar bias. We therefore consider contextual bets of the
form
\(
    \lambda_t(\theta)=\langle\theta,\phi_t\rangle ,
\)
and use online learning to adapt $\theta_t$ from the observed stream. Thus the
full-feature skeptic can use the entire predictable feature dictionary to learn
profitable betting directions, while the learned weights also serve as
interpretable diagnostics for which features expose the forecaster's
conditional miscalibration.

\textbf{Alternatives induced by the hierarchy.}
We first describe how alternatives are induced by the hierarchy of monitoring
information. 
Fix an ambient model class $\mathcal P$ of probability laws on $(\Omega,\mathcal F)$.
For any monitoring information $\mathbb H$, define the corresponding alternative
as
\(
    \mathcal Q(\mathbb H)
    :=
    \mathcal P\setminus \mathcal P_0(\mathbb H).
\)
In particular,
\(
    \mathcal Q^{\rm marg}
    :=
    \mathcal P\setminus\mathcal P_0^{\rm marg},
    \mathcal Q^{w}
    :=
    \mathcal P\setminus\mathcal P_0^{w},
    \mathcal Q^{\phi}
    :=
    \mathcal P\setminus\mathcal P_0^{\phi}.
\)
Because the nulls are nested, the alternatives are nested in the reverse
direction:
\(
    \mathcal Q^{\rm marg}
    \subseteq
    \mathcal Q^{w}
    \subseteq
    \mathcal Q^{\phi}.
\)
Hence full-feature alternatives are the broadest: a forecaster may violate
full-feature calibration while still satisfying every coarser calibration null
available to a weaker auditor.

\textbf{Conditional means at different information levels.}
For any law $Q$ and any monitoring information $\mathbb H$, write
\(
    m_t^{\mathbb H}(Q)
    :=
    \mathbb E_Q[Z_t\mid\mathcal H_{t-1}].
\)
When the law is clear from context, we suppress $Q$. For the three canonical
levels, write
\(
    m_t^{\rm marg}
    :=
    \mathbb E_Q[Z_t\mid\mathcal H_{t-1}^{\rm marg}],
\)
\(
    m_t^{w}
    :=
    \mathbb E_Q[Z_t\mid\mathcal H_{t-1}^{w}],
\)
\(
    m_t^{\phi}
    :=
    \mathbb E_Q[Z_t\mid\mathcal H_{t-1}^{\phi}].
\)

Then $Q\in\mathcal Q^a$ if and only if there exists at least one time $t$ such
that $m_t^a\ne0$ on an event of positive $Q$-probability, where
$a\in\{{\rm marg},w,\phi\}$. Moreover, by the tower property,
\(
    m_t^{\rm marg}
    =
    \mathbb E_Q[m_t^w\mid\mathcal H_{t-1}^{\rm marg}],
\)
\(
    m_t^w
    =
    \mathbb E_Q[m_t^\phi\mid\mathcal H_{t-1}^{w}].
\)
These identities quantify information loss: coarser skeptics observe only
conditional averages of the miscalibration visible at richer information levels.
The remainder of the section develops contextual betting strategies that exploit
the full-feature signal $m_t^\phi$ by learning feature-aligned bets online.

\paragraph{Full-feature contextual bets.}
Assume the full-feature vector $\phi_t\in\mathbb R^d$ is $\mathcal H_{t-1}^\phi$-measurable and satisfies $\|\phi_t\|_2\le R$ a.s. for all $t$, for a known constant $R>0$. Define the feasible set $K:=\{\theta\in\mathbb R^d:\|\theta\|_2\le 1/(2R)\}$. For $\theta\in K$, define the contextual betting fraction $\lambda_t(\theta):=\langle\theta,\phi_t\rangle$.
Since $|Z_t|\le1$ and $\|\phi_t\|_2\le R$, Cauchy--Schwarz yields
\(
|\lambda_t(\theta)Z_t|
\le \|\theta\|_2\|\phi_t\|_2|Z_t|
\le \frac12.
\)
Hence $1+\lambda_t(\theta)Z_t\ge 1/2>0$, so every predictable sequence $\theta_t\in K$ induces the valid no-bankruptcy wealth process
\[
M_t^{\rm ctx}
:=
\prod_{i=1}^t
\left(1+\langle\theta_i,\phi_i\rangle Z_i\right).
\]
For a fixed comparator $\theta\in K$, define the one-step loss $g_t(\theta):=-\log(1+\langle\theta,\phi_t\rangle Z_t)$ and the full-feature quadratic edge
\(
b_t^Q(\theta)
:=
\langle\theta,\phi_t\rangle m_t^\phi
-
\langle\theta,\phi_t\rangle^2.
\)
Next, we define the linear predictable-edge alternative $\mathcal Q_{\kappa,t_0}^{\phi,\operatorname{lin}}(\Phi)$.
For this alternative class, the full-feature miscalibration is exploitable by a linear betting direction in the dictionary $\Phi$. The skeptic can learn linear bets of the form $\lambda_t(\theta):=\langle\theta,\phi_t\rangle$ via OCO to obtain finite-time power and stopping-time bounds, against this alternative class.
We state the uniform-edge case in the
main text because it yields a clean stopping-time bound in
Theorem~\ref{thm:power-linear-full-feature}; the more general cumulative-edge
regret-to-power theorem, which allows nonlinear envelopes and intermittent
edge, is relegated to the Appendix (see Thm. \ref{thm:full-feature-regret-to-power} and Remark \ref{remark:connection_to_uniform_edge_cumulative}).

\begin{definition}[Linear full-feature predictable-edge alternative]
\label{def:linear-full-feature-alternative}
Fix $\kappa>0$, $t_0\in\mathbb N$, and a predictable feature sequence
$\Phi=(\phi_t)_{t\ge1}$. The linear full-feature predictable-edge alternative is the class
\begin{equation}
\label{eq:uniform_edge}
    \mathcal Q_{\kappa,t_0}^{\phi,\operatorname{lin}}(\Phi)
    :=
    \left\{
        Q:\;
        \exists \theta^\star\in K
        \text{ such that }
        b_t^Q(\theta^\star)\ge\kappa
        \quad Q\text{-a.s. for every }t\ge t_0
    \right\}.
\end{equation}
\end{definition}

\paragraph{Regret-to-power for full-feature contextual betting.}

Let an online learner output $\mathcal H_{t-1}^\phi$-measurable vectors $\theta_t\in K$. With $M_t^{\rm ctx}$ and $g_t$ defined as before, we can define the regret against a fixed comparator $\theta\in K$ by $R_T^{\rm ctx}(\theta):=\sum_{t=1}^T\{g_t(\theta_t)-g_t(\theta)\}$. We have the following result.

\begin{theorem}[Power against the linear full-feature alternative]
\label{thm:power-linear-full-feature}
Assume that the online learner satisfies the pathwise regret bound $R_T^{\rm ctx}(\theta)\le r_T$ for every $\theta\in K$ and every $T\ge1$, where $(r_T)_{T\ge1}$ is deterministic. 
Let $Q\in\mathcal Q_{\kappa,t_0}^{\phi,\operatorname{lin}}(\Phi)$ with $\kappa\in(0,1]$. Define
\[
    n_\gamma^{\rm ctx}(r)
    :=
    \inf\left\{
        n\ge t_0:\,
        \log(1/\gamma)+2(t_0-1)+r_T\le\frac{\kappa T}{2}
        \text{ for every }T\ge n
    \right\},
\]
with $\inf\emptyset:=\infty$. If $n_\gamma^{\rm ctx}(r)<\infty$, then for every $T\ge n_\gamma^{\rm ctx}(r)$, $Q(\tau_\gamma^{\rm ctx}>T)\le\exp(-\kappa^2T/2)$. Consequently, 
\begin{gather*}
Q\!\left(\tau_\gamma^{\rm ctx}<\infty\right) = 1
\tag*{\text{\textbf{\textit{(Power)}}}} \\[0.4em]
\mathbb{E}_Q\!\left[\tau_\gamma^{\rm ctx}\right]
\le n_\gamma^{\rm ctx}(r)+\frac{4}{\kappa^2}
\tag*{\text{\textbf{\textit{(Stopping time)}}}}
\end{gather*}

\begin{remark}[Interpreting the certification time $n_\gamma^{\rm ctx}(r)$]
\label{rem:certification-time}
The quantity $n_\gamma^{\rm ctx}(r)$ is a deterministic certification time, which is different form the stopping time: it is the first horizon after which the linear predictable
edge guaranteed by the alternative dominates the Ville threshold, the pre-$t_0$
penalty, and the learner's regret. Writing
\(
    A_{\gamma,t_0}
    :=
    \log(1/\gamma)+2(t_0-1),
\)
we may equivalently express
\(
    n_\gamma^{\rm ctx}(r)
    =
    \inf\left\{
        n\ge t_0:
        A_{\gamma,t_0}+r_T
        \le
        \frac{\kappa T}{2}
        \text{ for every }T\ge n
    \right\}.
\)
Thus
\(
    n_\gamma^{\rm ctx}(r)<\infty
    \quad\Longleftrightarrow\quad
    \exists n\ge t_0
    \text{ such that }
    \sup_{T\ge n}
    \left\{
        r_T-\frac{\kappa T}{2}
    \right\}
    \le
    -A_{\gamma,t_0}.
\)
A simple sufficient condition is
\(
    \limsup_{T\to\infty}\frac{r_T}{T}
    <
    \frac{\kappa}{2}.
\)
In particular, any online learner with sublinear regret $r_T=o(T)$ has
$n_\gamma^{\rm ctx}(r)<\infty$. This is why standard OCO algorithms with
sublinear adversarial regret can be used as contextual skeptics: once their
learning cost is asymptotically smaller than the cumulative predictable edge,
Theorem~\ref{thm:power-linear-full-feature} enters the exponential-power
regime. While our theoretical framework so far is agnostic to the choice of the OCO algorithm, we summarize the update rules for common OCO algorithms (namely, OGD, FTLR, and ONS) in Table \ref{tab:oco-contextual-bets} of the Appendix~\ref{app:oco-certification-time}. The experimental analysis in the following section uses various variants of these standard OCO algorithms. 
\end{remark}

\begin{remark}[Beyond uniform edge alternative \eqref{eq:uniform_edge}]
\label{remark:cumulative-edge-informal}
The uniform-edge alternative in Definition~\ref{def:linear-full-feature-alternative}
is a special case of a more general cumulative-edge alternative studied in Appendix~\ref{def:cumulative-full-feature-edge-alternative}. The cumulative alternative $\mathcal Q_{\rm cum}^{\phi}(D^{\kappa,t_0};\Phi)$ is defined by requiring that, for some
comparator $\theta^\star\in K$, the cumulative predictable edge
\(
    \sum_{t=1}^T b_t^Q(\theta^\star)
\)
is lower bounded by a deterministic envelope $D_T$. 
We then have \(
    \mathcal Q_{\kappa,t_0}^{\phi,\operatorname{lin}}(\Phi)
    \subseteq
    \mathcal Q_{\rm cum}^{\phi}(D^{\kappa,t_0};\Phi)
    \subseteq
    \mathcal Q^{\phi}
\)
(ref. Prop. \ref{prop:linear-alt-subset-full-alt}, Remark \ref{remark:connection_to_uniform_edge_cumulative}).
The corresponding
regret-to-power theorem (Thm. \ref{thm:full-feature-regret-to-power}) shows that 
power only requires cumulative feature-aligned signal to dominate the
Ville threshold and the learner's regret; it need not be uniformly positive at
every time.

\end{remark}

\end{theorem}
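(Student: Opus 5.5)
The plan is to bound $\log M_T^{\rm ctx}$ from below by the comparator's log-wealth, then by a martingale plus the cumulative edge, and finally apply Azuma--Hoeffding.

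\emph{Step 1 (regret reduction).} By definition of regret, $\log M_T^{\rm ctx}=-\sum_{t\le T}g_t(\theta_t)\ge -\sum_{t\le T}g_t(\theta^\star)-r_T$, where $\theta^\star\in K$ is the comparator supplied by $Q\in\mathcal Q^{\phi,\operatorname{lin}}_{\kappa,t_0}(\Phi)$.

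\emph{Step 2 (quadratic lower bound on comparator log-wealth).} Write $x_t:=\langle\theta^\star,\phi_t\rangle Z_t$, so $|x_t|\le 1/2$. We use the elementary inequality $\log(1+x)\ge x-x^2$ for $|x|\le 1/2$. Since $Z_t^2\le 1$, this gives
\[
-g_t(\theta^\star)\ge \langle\theta^\star,\phi_t\rangle Z_t-\langle\theta^\star,\phi_t\rangle^2 .
\]
Let $\xi_t:=\langle\theta^\star,\phi_t\rangle(Z_t-m_t^\phi)$. Because $\langle\theta^\star,\phi_t\rangle$ is $\mathcal H^\phi_{t-1}$-measurable, $\xi_t$ is a martingale difference with respect to $\mathbb H^\phi$. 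Moreover, $Z_t-m_t^\phi$ ranges over an interval of length $1$ and $|\langle\theta^\star,\phi_t\rangle|\le 1/2$, so $\xi_t$ ranges over a predictable interval of length at most $1/2$. The bound above then reads $-g_t(\theta^\star)\ge b_t^Q(\theta^\star)+\xi_t$.

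\emph{Step 3 (handling $t<t_0$).} For $t<t_0$, $b_t^Q(\theta^\star)$ has no guaranteed sign, so I would replace the edge bound there by a crude one. Since $|\langle\theta^\star,\phi_t\rangle|\le 1/2$ and $|m_t^\phi|\le1$, we have $b_t\ge -1/2-1/4$. To get the clean constant, use $-g_t(\theta^\star)\ge\log(1/2)\ge -1$ and compare with $\xi_t\le 1/2$, which yields $-g_t(\theta^\star)\ge \xi_t-2$ in either case. Summing over $t<t_0$ produces the $2(t_0-1)$ penalty. Combining with Steps 1 and 2, for $T\ge t_0$,
\[
\log M_T^{\rm ctx}\ge \kappa (T-t_0+1)+\sum_{t\le T}\xi_t-2(t_0-1)-r_T .
\]
The slack $\kappa(t_0-1)$ can be dropped or absorbed; I would check the exact bookkeeping so that the stated constant $2(t_0-1)$ appears, possibly writing the edge term as $\kappa T-\kappa(t_0-1)$ and absorbing it using $\kappa\le1$.

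\emph{Step 4 (tail bound).} On $\{\tau_\gamma^{\rm ctx}>T\}$ we have $\log M_T^{\rm ctx}<\log(1/\gamma)$. For $T\ge n^{\rm ctx}_\gamma(r)$, the definition of the certification time gives $\log(1/\gamma)+2(t_0-1)+r_T\le\kappa T/2$, so this event forces $\sum_{t\le T}\xi_t\le -\kappa T/2$. Azuma--Hoeffding with ranges of length $1/2$ bounds its probability by $\exp\bigl(-2(\kappa T/2)^2/(T/4)\bigr)=\exp(-2\kappa^2T)\le\exp(-\kappa^2T/2)$. The bound is valid for non-i.i.d.\ data because only the martingale-difference structure of $\xi_t$ is used.

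\emph{Step 5 (consequences).} Power follows because $Q(\tau=\infty)\le\lim_T\exp(-\kappa^2T/2)=0$. For the stopping time, write $\mathbb E_Q[\tau]=\sum_{T\ge0}Q(\tau>T)$ and bound the first $n^{\rm ctx}_\gamma$ terms by $1$ each. The remaining terms form a geometric tail, $\sum_{T\ge n}e^{-\kappa^2T/2}\le 1/(1-e^{-\kappa^2/2})$. Since $\kappa^2/2\le1/2$, the inequality $1-e^{-x}\ge x/2$ for $x\le1$ gives a tail bound of at most $4/\kappa^2$.

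The main obstacle is the constant bookkeeping in Step 3, namely the pre-$t_0$ penalty and the exact shape of the edge term, so that the certification-time condition matches precisely. Everything else is routine.
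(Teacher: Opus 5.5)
Your argument is essentially the paper's. The paper first proves the cumulative-edge Theorem~\ref{thm:full-feature-regret-to-power} using exactly your Steps 1, 2 and 4: the regret reduction, $\log(1+x)\ge x-x^2$, the decomposition $-g_t(\theta^\star)\ge b_t^Q(\theta^\star)+\xi_t$, and Azuma--Hoeffding. It then plugs in the envelope $D_T=\kappa T-2(t_0-1)$ and finishes with the same tail-sum computation as your Step 5. You apply Hoeffding with the conditional range $|\langle\theta^\star,\phi_t\rangle|\le 1/2$. This is sharper than the paper's choice $c_t=1$, gives $\exp(-2\kappa^2T)$, and is fine.

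The step that does not close as written is Step 3. In $\kappa(T-t_0+1)=\kappa T-\kappa(t_0-1)$, the term $\kappa(t_0-1)$ is a deficit, not slack. Your per-round pre-$t_0$ penalty of $2$ (from $-g_t(\theta^\star)\ge\xi_t-2$) then makes the total penalty $(2+\kappa)(t_0-1)$, which can reach $3(t_0-1)$. So the certification condition $\log(1/\gamma)+2(t_0-1)+r_T\le\kappa T/2$ no longer forces $\sum_{t\le T}\xi_t\le-\kappa T/2$ in Step 4.

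The fix is the bound you had already computed. For $t<t_0$, keep $-g_t(\theta^\star)\ge b_t^Q(\theta^\star)+\xi_t$ and use $b_t^Q(\theta^\star)\ge-3/4\ge-1$. Then
\[
\sum_{t=1}^T b_t^Q(\theta^\star)\ge-(t_0-1)+\kappa(T-t_0+1)\ge\kappa T-2(t_0-1)
\]
by $\kappa\le1$. This is exactly the paper's bookkeeping, and the rest of your proof goes through verbatim.

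Alternatively, your sharper Hoeffding constant could absorb the cruder penalty. If $r_T\ge0$, the certification condition forces $T\ge4(t_0-1)$, hence $\kappa(t_0-1)\le\kappa T/4$. Then $\exp\bigl(-2(\kappa T/4)^2/(T/4)\bigr)=\exp(-\kappa^2T/2)$. That is an extra argument you did not make, though.
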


\section{Experiments}
\label{sec:experiments}

Our experiments are organized around the validity--power separation developed
above. We ask three questions. First, can a coarser skeptic be valid but blind
when the hit process is non-i.i.d. and the violation is visible only through
richer information? Second, under a correctly specified oracle forecaster, do the
betting processes respect the Ville-prescribed Type-I level? Third, when the
oracle is replaced by a black-box quantile forecaster, do feature-aware skeptics
detect conditional miscalibration missed by marginal audits? Throughout, for a
test level $\gamma_{\rm test}=0.05$, we record a rejection if
$\tau_{\gamma_{\rm test}}\le T$, equivalently if the running wealth crosses the
anytime-valid threshold $1/\gamma_{\rm test}=20$ by the end of the horizon.

\textbf{Sticky-coin: marginal vs.\ conditional null.}
\label{sec:synthetic}
We first remove the forecasting layer and feed the skeptic a controlled hit
process. Specifically, we use the two-state Markov chain of
\citet{christoffersen2004backtesting}, reparametrized so that the hit indicator
$B_t=\mathbf 1\{Y_t\le \hat q_t\}$ has marginal probability $\alpha$ and
one-step autocorrelation $\delta$. In this experiment, the pair
$(Y_t,\hat q_t)$ is not modeled separately; instead, the centered hit
$Z_t=B_t-\alpha$ is generated directly from $(\alpha,\delta)$. This isolates
the effect of monitoring information on power.
For each cell of the $(\alpha,\delta)$ grid, we run $N$ Monte Carlo trials of
length $T$ and average the rejection time $\bar\tau$. Figure~\ref{fig:sticky_heatmap}
shows the gap predicted by Proposition~\ref{prop:coarser-tests-can-miss}: the
intercept-only marginal skeptic is censored at $T$ across the valid region,
because it cannot exploit one-step dependence, whereas the $Z_{t-1}$-aware
skeptic rejects quickly across most of the grid.

\begin{figure}[t]
    \centering
    \includegraphics[width=0.85\textwidth]{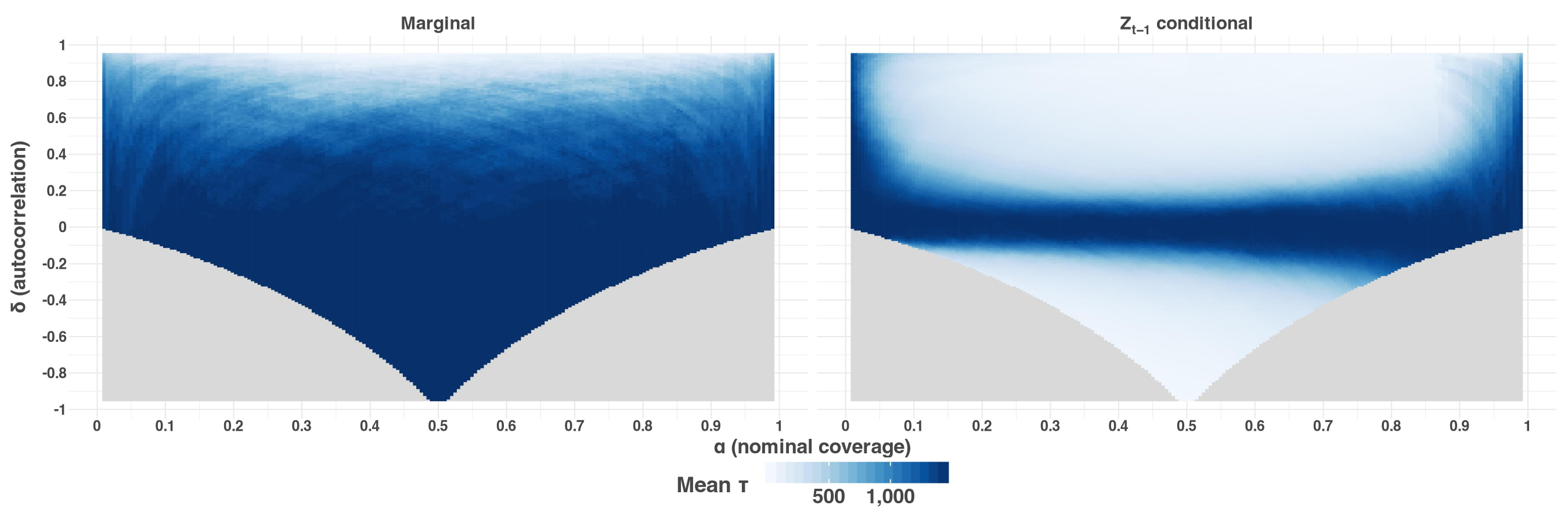}
    \caption{Mean rejection time $\bar\tau$ over the $(\alpha,\delta)$ grid
    for the AdaGrad OCO skeptic ($N{=}50$ runs, $T{=}1{,}460$,
    $\gamma_{\rm test}{=}0.05$). Censored cells, where no rejection occurs by
    time $T$, appear dark blue; faster rejections appear light blue; grey marks
    invalid parameter pairs. The marginal skeptic is censored across the valid
    region, while the $Z_{t-1}$-aware skeptic rejects quickly.}
    \label{fig:sticky_heatmap}
\end{figure}

\begin{figure}[!b]
    \centering
    \includegraphics[width=0.85\textwidth]{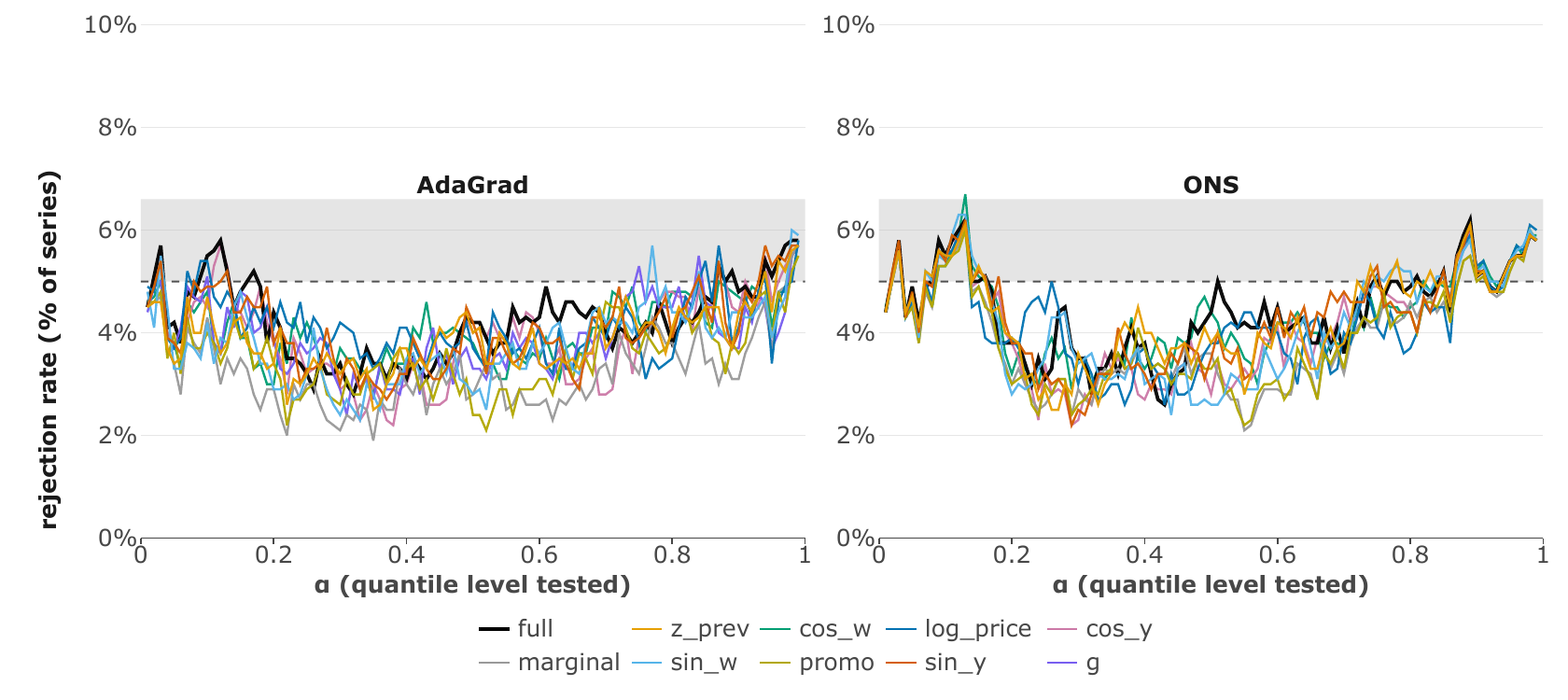}
    \caption{Type-I rejection rate versus quantile level $\alpha$ at
    $\gamma_{\rm test}=0.05$ for the AdaGrad and ONS skeptics on
    $N{=}1{,}000$ oracle series ($T{=}1{,}460$). The grey band is a one-sided
    $99\%$ Monte Carlo tolerance band. SGD and FTRL show the same pattern and
    are reported in Appendix~\ref{app:additional_simulations}
    (Figure~\ref{fig:validity-appendix}).}
    \label{fig:validity}
\end{figure}

\textbf{Validity under the null.}
\label{sec:validity}
We next instantiate the full forecast-and-bet protocol under a calibrated
synthetic oracle. The data are $N{=}1{,}000$ count series over $T{=}1{,}460$
days from the Negative-Binomial panel described in
Appendix~\ref{app:simulation_protocol}. At each quantile level
$\alpha\in\{0.01,0.02,\ldots,0.99\}$, the oracle issues the corresponding
Negative-Binomial forecast under the true parameters. Thus any rejection in this
experiment is a false alarm for the intended calibration benchmark.
For each pair of series and quantile level, we run the betting audit with the
same feature configurations used below and reject when
$\tau_{\gamma_{\rm test}}\le T$. Figure~\ref{fig:validity} shows that the
empirical rejection rates track the nominal $5\%$ Ville level across quantile
levels and skeptic configurations. This confirms the main validity claim in a
fully specified forecast-and-outcome setting before replacing the oracle by a
black-box forecaster.

\textbf{Experiments with a quantile forecaster.}
\label{sec:realdata}
Finally, we replace the calibrated oracle with Chronos-2 forecasts
\citep{ansari2025chronos}. We evaluate two panels. The first reuses the
synthetic Negative-Binomial series from the validity experiment, but now the
forecasted quantiles are produced by Chronos-2 rather than by the true
conditional law. The second is the Rossmann store-sales dataset
\citep{cukierski2015rossmann}, where the true conditional law is unknown. In
both cases, the goal is no longer to measure Type-I error under a known null, but
to ask which monitoring views expose forecast miscalibration.
For each quantile level, we run marginal, single-feature, and full-feature
skeptics using only information available before $Y_t$ is observed. The marginal
skeptic uses an intercept-only betting direction; a single-feature skeptic uses
one coordinate of the predictable feature dictionary, such as promotion status,
Saturday, or log-price; and the full-feature skeptic uses the full dictionary.
Figure~\ref{fig:chronos_panel} reports per-feature rejection rates. On both
datasets, feature-aware skeptics reject far more often than the marginal skeptic.
The strongest signals are promotion status on both panels, Saturday on Rossmann,
and log-price on the synthetic data, illustrating that the learned evidence is
not only more powerful but also feature-diagnostic.

\begin{figure}[h!]
    \centering
    \includegraphics[width=0.85\textwidth]{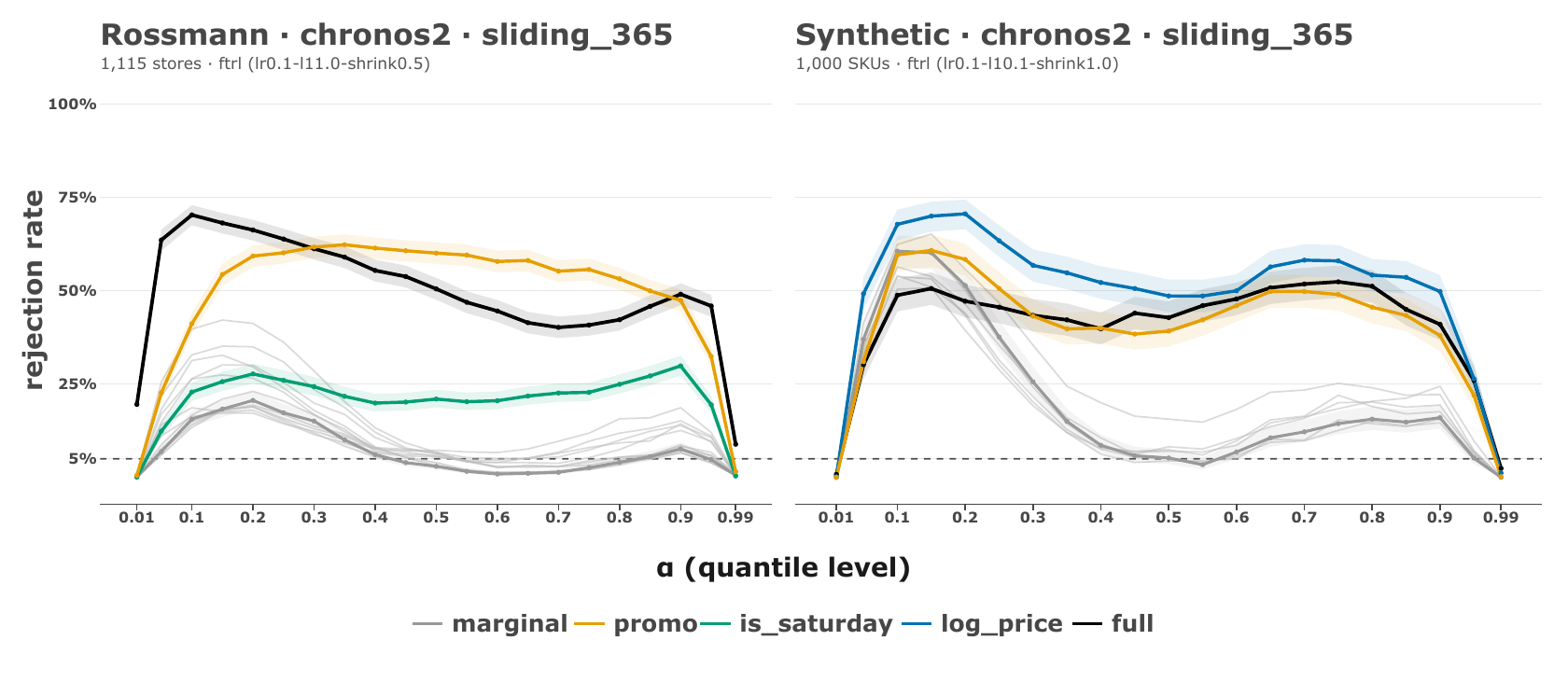}
    \caption{Per-feature rejection rate versus quantile level $\alpha$ at
    $\gamma_{\rm test}=0.05$ for FTRL betting on Chronos-2 forecasts.
    Left: Rossmann store-sales data with $N{=}1{,}115$ stores. Right:
    synthetic Negative-Binomial panel with $N{=}1{,}000$ series. Each
    highlighted curve is one skeptic configuration; faded grey curves are the
    remaining single-feature skeptics. The dashed line marks the Ville
    $5\%$ reference level from the calibrated-oracle experiment
    (Figure~\ref{fig:validity}). Feature-aware skeptics reveal conditional
    miscalibration that is largely missed by marginal audits, most visibly
    through \texttt{promo}, \texttt{is\_saturday}, and \texttt{log\_price}.}
    \label{fig:chronos_panel}
\end{figure}
\section{Conclusion}
\label{sec:discussion}

We developed an information-aware testing-by-betting framework for auditing
black-box conditional quantile forecasters. For any monitoring information
available to the auditor, predictable no-bankruptcy bets generate anytime-valid
e-processes for the corresponding calibration null, without assuming
independence, stationarity, parametric structure, or moment conditions on the
outcomes. The resulting null hierarchy separates validity from power: coarser
audits remain safe, but can be blind to violations visible only through richer
features. To gain feature-specific power, we introduced contextual betting, in
which an online learner adapts linear bets over the auditor's predictable feature
dictionary. Under persistent, or more generally cumulative, feature-aligned
predictable edge, pathwise regret bounds translate into finite-time detection
and stopping-time guarantees. Empirically, the method controls Type-I error under
a calibrated oracle and detects conditional miscalibration missed by marginal
audits on controlled non-i.i.d. streams and on Chronos-2 forecasts for synthetic
and Rossmann store-sales data.
Several directions remain open. Our contextual bets are linear in the supplied feature dictionary and might consequently fail to be sufficiently representative of miscalibrations in high dimensions. Therefore, extending the framework to nonlinear betting classes
(e.g., kernelized or neural skeptics) while preserving validity and obtaining
useful regret-to-power guarantees is an important next step. Our power guarantees
are necessarily feature-relative: if the auditor does not observe features that
expose the violation, the corresponding audit may remain valid but powerless.
The current framework also tests one quantile level at a time; aggregating
evidence across quantiles, or comparing two forecasters head-to-head, are natural
extensions of the same betting machinery. Finally, our experiments focus on
count-valued demand data and one foundation model forecaster; broader benchmarks across
continuous, multivariate, and non-retail settings, as well as elicitable
functionals beyond quantiles, are left for future work.

{\setstretch{1.0}
\bibliographystyle{plainnat}
\bibliography{references}
}
\newpage
\appendix
\counterwithin{table}{section}
\renewcommand{\thetable}{\thesection.\arabic{table}}

\section{Related work}
\label{sec:related}

\paragraph{Forecast evaluation and conditional calibration.}
Classical forecast evaluation already identifies the core failure mode motivating
our work: forecasts can be calibrated on average while being predictably wrong
after conditioning on information available at forecast time.
For interval forecasts, \citet{christoffersen1998evaluating} showed that
unconditional coverage alone is inadequate when violations cluster dynamically.
For density and distributional forecasts, PIT diagnostics and calibration
taxonomies clarify how marginal, exceedance, and probabilistic calibration differ
\citep{diebold1998evaluating,gneiting2007probabilistic}. In risk management,
the analogous single-quantile problem is value-at-risk backtesting
\citep{christoffersen2004backtesting,engle2004caviar}. Our work studies the
sharp conditional quantile version of this program. Rather than evaluating an
entire predictive distribution, we audit a deployed black-box quantile forecast
$\widehat q_t$ through the hit
\(
    Z_t=\mathbf 1\{Y_t\le \widehat q_t\}-\alpha.
\)
This is narrower than full distributional calibration, but it is exactly the
property relevant to one-tail decisions such as inventory control and VaR-based
risk management. The main novelty is that we index the calibration null by the
auditor's monitoring information, thereby separating marginal validity from
feature-specific detectability.

\paragraph{Safe anytime-valid inference and sequential quantile methods.}
Classical backtests are typically fixed-horizon, whereas deployed forecasting
systems are monitored continuously. Safe anytime-valid inference addresses this
problem using nonnegative test supermartingales, e-values, and e-processes, whose
validity is preserved under optional stopping
\citep{ville1939etude,shafer2011test,shafer2021testing,vovk2021evalues,ramdas2022testing,ramdas2023game,grunwald2024safe,ramdas2025ebook}.
This viewpoint is natural in prequential forecast evaluation, where forecasts and
outcomes arrive sequentially \citep{1999_dawid_prequential,shafer2019game}.

Several recent works apply this methodology to quantiles, CDFs, or forecast
calibration, but with different statistical targets. \citet{howard2022sequential}
construct time-uniform confidence sequences for population quantiles from an
i.i.d. stream, with applications to A/B testing and best-arm identification. Their
object is quantile estimation from samples; our object is testing whether a
random, previously issued forecast $\widehat q_t$ satisfies
$P(Y_t\le \widehat q_t\mid\mathcal H_{t-1})=\alpha$. This extra layer is crucial:
even if one can estimate a population quantile sequentially, auditing a black-box
conditional quantile forecaster requires testing the conditional hit probability
of the issued forecasts. \citet{mineiro2023timeuniform} extend time- and
value-uniform CDF inference to nonstationary streams by targeting the running
average of conditional distributions. This is complementary to our work: they
estimate an entire changing CDF, whereas we test a forecast-specific conditional
calibration null and study feature-aware power. \citet{henzi2022sequentialcalibration}
develop sequentially valid tests for calibration of probabilistic forecasts,
closely related to PIT/probabilistic calibration. This is of separate interest,
but it requires distributional forecasts and is not directly recoverable from
quantile forecasts alone.

\paragraph{Elicitable functionals, sequential testing, and e-backtesting.}
Quantiles are elicitable and identifiable functionals: the pinball loss is
strictly consistent, and the identification function
$V_\alpha(y,q)=\mathbf 1\{y\le q\}-\alpha$ vanishes at an $\alpha$-quantile
\citep{koenker1978regression,gneiting2011quantiles,fissler2016higher}. The work
closest to ours at the methodological level is \citet{casgrain2024sequential},
who develop sequential tests for general elicitable and identifiable
functionals via supermartingales and predictable mixing, and use OCO regret to
obtain power guarantees. Our test can be viewed as a quantile-hit specialization
of this general paradigm, but with a different null and a different power
question. We audit random, covariate-dependent forecasts
$\widehat q_t$ and explicitly allow the skeptic's monitoring information
$\mathbb H$ to be a strict sub-filtration of the full pre-outcome information.
This leads to a hierarchy of nulls and to feature-specific power statements:
a coarser valid test can be powerless against violations visible only through
richer features. In contrast, the standard elicitable-functional setup
conditions on the full available filtration. Moreover, by specializing to the
bounded two-point hit process $Z_t\in\{-\alpha,1-\alpha\}$, our validity theory
requires no moment, stationarity, parametric, or tail assumptions on $Y_t$.

The e-backtesting framework of \citet{ebacktesting} is also closely related.
They introduce backtest e-statistics and construct e-processes for risk measure
forecasts, especially VaR and ES. Their main motivation is that VaR testing alone
is insufficient for regulatory market-risk backtesting, and that ES forecasts
require model-free anytime-valid tests. Our overlap with their VaR case is
mathematical but not conceptual: the VaR e-statistic
$\mathbf 1\{Y_t>\widehat q_t\}/(1-\alpha)$ corresponds to one endpoint of our
no-bankruptcy betting interval. 
The main distinctions are that e-backtesting uses the full market
information filtration and focuses on one-sided risk-measure underestimation,
whereas we index the null by the auditor's information set and study how missing
or available features determine power. Their optimality and power analysis is
developed for settings such as i.i.d. losses or i.i.d. e-statistics, while our
main power result is a finite-time regret-to-power guarantee for non-i.i.d.,
history-dependent streams with persistent feature-aligned predictable edge.

\paragraph{Conformal prediction, monitoring, and online learning.}
Our goal is also distinct from conformal prediction. Conformal prediction and
conformalized quantile regression construct prediction sets or intervals with
coverage guarantees \citep{vovk2005algorithmic,romano2019conformalized,angelopoulos2023gentle},
and adaptive conformal methods update these constructions under distribution
shift to maintain long-run coverage frequencies \citep{gibbs2021adaptive}.
We instead audit a fixed deployed forecaster: we do not wrap, recalibrate, or
modify its predictions. The question is whether its realized errors are
conditionally exploitable by an auditor with a specified information set.

Finally, our contextual betting procedures use online convex optimization to
learn feature-aware bets. Classical OCO algorithms such as projected online
gradient descent, follow-the-regularized-leader, online Newton step, and
coin-betting provide regret guarantees for adapting decisions from sequential
data \citep{zinkevich2003online,hazan2007logarithmic,ShalevShwartz2012OCO,orabona2016coin}.
In our setting, regret has a statistical interpretation: it is the finite learning
cost paid before the skeptic can exploit feature-aligned miscalibration. Thus the
paper ties together classical conditional forecast evaluation, anytime-valid
testing, elicitable-functional martingales, and contextual online learning into
a feature-aware audit for black-box conditional quantile forecasters.

\section{Methodological details}

\subsection{Game protocol}
\label{apx:game_protocol}

\begin{definition}[Protocol for conditional quantile calibration game]
\label{def:filtration}
Skeptic starts with wealth $M_0=1$. For each round $t=1,2,\ldots$, the game proceeds as follows: 
\begin{enumerate}
    \item The predictable context $C_t$, assumed $\mathcal F_{t-1}$-measurable, is revealed and available to Forecaster and Skeptic.
    \item Forecaster announces a real-valued forecast $\hat q_t \in \mathbb R$, which is assumed to be $\mathcal F_{t-1}$-measurable.
    \item Skeptic chooses a betting multiplier $\lambda_t \in \Lambda_\alpha:=[-1/(1-\alpha),\,1/\alpha]$, assumed $\mathcal F_{t-1}$-measurable.
    \item Reality reveals the outcome $Y_t$, assumed $\mathcal F_t$-measurable.
    \item Skeptic's wealth is updated by
$M_t=M_{t-1}(1+\lambda_t Z_t)$.
\end{enumerate}
\end{definition}

The process $(M_t)$ has the betting interpretation that the skeptic starts with wealth one and repeatedly stakes a predictable fraction against the forecaster's asserted hit probability. Under $\mathcal P_0$, each legal one-step game is conditionally fair. 

\subsection{OCO algorithms}
\label{app:oco-certification-time}

In this subsection, we record three standard OCO algorithms, whose variants are used in our experimental analysis. We also introduce the regret rates for the corresponding algorithms from standard sources. Throughout, the learner chooses
$\theta_t\in K$ before observing $Z_t$, uses the betting fraction
\(
    \lambda_t=\langle\theta_t,\phi_t\rangle,
\)
and then observes the convex loss
\(
    g_t(\theta)
    :=
    -\log\left(1+\langle\theta,\phi_t\rangle Z_t\right).
\)
On $K=\{\theta:\|\theta\|_2\le 1/(2R)\}$, the denominator in
$\nabla g_t$ is at least $1/2$, so
\[
    \|\nabla g_t(\theta)\|_2
    =
    \left\|
        -\frac{Z_t\phi_t}
        {1+\langle\theta,\phi_t\rangle Z_t}
    \right\|_2
    \le 2R.
\]
Thus the losses are convex and uniformly Lipschitz on $K$. Moreover,
$\exp(-g_t(\theta))=1+\langle\theta,\phi_t\rangle Z_t$ is affine and
positive on $K$, so $g_t$ is exp-concave on $K$.

\paragraph{Online gradient descent.}
Projected online gradient descent updates in the direction of the negative
observed gradient and projects back to $K$:
\[
    \theta_{t+1}^{\rm OGD}
    =
    \Pi_K
    \left(
        \theta_t^{\rm OGD}
        -
        \eta_t\nabla g_t(\theta_t^{\rm OGD})
    \right),
\]
where $\Pi_K$ is Euclidean projection. For convex Lipschitz losses over a
bounded Euclidean domain, projected OGD achieves $O(\sqrt T)$ static regret
against the best fixed comparator in hindsight
\citep[Theorem~1]{zinkevich2003online};
see also \citet[Corollary~2.7]{ShalevShwartz2012OCO}.

\paragraph{Follow-the-regularized-leader.}
Euclidean FTRL chooses the next betting parameter by minimizing past losses
plus a stabilizing quadratic regularizer:
\[
    \theta_{t+1}^{\rm FTRL}
    =
    \arg\min_{\theta\in K}
    \left\{
        \sum_{i=1}^t g_i(\theta)
        +
        \frac{1}{2\eta}\|\theta\|_2^2
    \right\}.
\]
With $\eta=\Theta(T^{-1/2})$ when the horizon is known, or with a standard
doubling trick when it is not, Euclidean FTRL has $O(\sqrt T)$ regret for
convex Lipschitz losses. This follows from the strongly convex regularizer
analysis of Follow-the-Regularized-Leader
\citep[Theorem~2.11 and Corollary~2.12]{ShalevShwartz2012OCO}.

\paragraph{Online Newton Step.}
Because the betting losses are exp-concave, one can exploit curvature using
Online Newton Step. With
\[
    \nabla_t:=\nabla g_t(\theta_t^{\rm ONS}),
    \qquad
    A_t=A_{t-1}+\nabla_t\nabla_t^\top,
\]
the update is
\[
    \theta_{t+1}^{\rm ONS}
    =
    \Pi_K^{A_t}
    \left(
        \theta_t^{\rm ONS}
        -
        \eta A_t^{-1}\nabla_t
    \right),
\]
where
\(
\Pi_K^{A}(y)
    :=
    \arg\min_{\theta\in K}
    (\theta-y)^\top A(\theta-y).
\)
For exp-concave losses with bounded gradients over a bounded domain, ONS
achieves logarithmic regret, specifically $O(d\log T)$ in dimension $d$
\citep[Theorem~2]{hazan2007logarithmic}. Our proof instantiates this general
result for the contextual betting loss and gives the explicit bound used in the
main text.

\begin{table}[t]
\centering
\small
\setlength{\tabcolsep}{4pt}
\renewcommand{\arraystretch}{1.15}
\begin{tabularx}{\linewidth}{
  @{}
  p{0.27\linewidth}
  >{\raggedright\arraybackslash}p{0.23\linewidth}
  >{\raggedright\arraybackslash}X
  @{}
}
\toprule
Algorithm
&
Regret guarantee
&
\multicolumn{1}{c@{}}{Contextual betting update}
\\
\midrule
Projected OGD
\citep[Theorem~1]{zinkevich2003online};
\citep[Corollary~2.7]{ShalevShwartz2012OCO}
&
$R_T^{\rm ctx}(\theta)=O(\sqrt T)$ for convex Lipschitz losses
&
\[
\theta_{t+1}
=
\Pi_K\!\left(
    \theta_t-\eta_t\nabla g_t(\theta_t)
\right).
\]
\\
\midrule
Euclidean FTRL
\citep[Theorem~2.11 and Corollary~2.12]{ShalevShwartz2012OCO}
&
$R_T^{\rm ctx}(\theta)=O(\sqrt T)$ for convex Lipschitz losses
&
\[
\theta_{t+1}
=
\arg\min_{\theta\in K}
\left\{
    \sum_{i=1}^t g_i(\theta)
    +
    \frac{1}{2\eta}\|\theta\|_2^2
\right\}.
\]
\\
\midrule
Online Newton Step
\citep[Theorem~2]{hazan2007logarithmic}
&
$R_T^{\rm ctx}(\theta)=O(d\log T)$ for exp-concave losses
&
\[
\begin{aligned}
A_t
&=
A_{t-1}+\nabla_t\nabla_t^\top,
\\
\theta_{t+1}
&=
\Pi_K^{A_t}
\left(
    \theta_t-\eta A_t^{-1}\nabla_t
\right).
\end{aligned}
\]
\\
\bottomrule
\end{tabularx}
\caption{Standard OCO algorithm updates for learning contextual betting directions.
Here $g_t(\theta)=-\log(1+\langle\theta,\phi_t\rangle Z_t)$,
$\nabla_t=\nabla g_t(\theta_t)$, and the next-round bet is given by
$\lambda_{t+1}=\langle\theta_{t+1},\phi_{t+1}\rangle$.}
\label{tab:oco-contextual-bets}
\end{table}

The implication of the regret rates presented in Table~\ref{tab:oco-contextual-bets} for our stopping time result in Theorem~\ref{thm:power-linear-full-feature} is immediate:
all three algorithms  have sublinear
regret in the adversarial OCO sense, and hence satisfy
\(
    \limsup_{T\to\infty}
    \frac{r_T}{T}
    =
    0
    <
    \frac{\kappa}{2}
\)
for every $\kappa>0$. Therefore the certification time
$n_\gamma^{\rm ctx}(r)$ is finite for each of these learners. The difference
is quantitative: OGD and Euclidean FTRL give $\sqrt T$ learning cost, whereas
ONS gives logarithmic learning cost for the exp-concave betting losses.

\section{Theoretical details}
\label{app:proofs}

\subsection{Missing definitions}

\begin{definition}[Conditional e-variables]
\label{def:conditional-evariables}
A nonnegative $\mathcal F_t$-measurable random variable $E_t$ is a conditional e-variable for $\mathcal P_0$ at time $t$ if, for every $P\in\mathcal P_0$, $\mathbb E_P[E_t\mid\mathcal F_{t-1}]\le1$ $P$-a.s. If $(E_t)_{t\ge1}$ is a sequence of conditional e-variables and $M_t:=\prod_{i=1}^tE_i$ with $M_0=1$, then $(M_t)_{t\ge0}$ is a test supermartingale, since $\mathbb E_P[M_t\mid\mathcal F_{t-1}]\le M_{t-1}$.
\end{definition}

\subsection{Supporting lemmas}

\begin{lemma}[Conditional Hoeffding bound with support-length control]
\label{lem:hoeffding-length}
Let $(m_t)_{t\ge1}$ be a martingale-difference sequence with respect to a filtration $(\mathcal F_t)_{t\ge0}$; that is, each $m_t$ is $\mathcal F_t$-measurable, integrable, and $\mathbb E[m_t\mid\mathcal F_{t-1}]=0$ a.s. Suppose there exist deterministic constants $c_t\ge0$ such that, for each $t$, there are $\mathcal F_{t-1}$-measurable random variables $a_t$ and $b_t$ satisfying $a_t\le m_t\le b_t$ a.s. and $b_t-a_t\le c_t$ a.s. Let $V_T:=\sum_{t=1}^T c_t^2$. Then, for every $T\ge1$ and $x>0$,
\[
    \mathbb P\!\left(\sum_{t=1}^T m_t\le -x\right)
    \le
    \exp\!\left(-\frac{2x^2}{V_T}\right),
    \qquad
    \mathbb P\!\left(\left|\sum_{t=1}^T m_t\right|\ge x\right)
    \le
    2\exp\!\left(-\frac{2x^2}{V_T}\right),
\]
with the convention that the right-hand side is $0$ when $V_T=0$.
\end{lemma}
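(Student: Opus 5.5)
This is the standard Azuma--Hoeffding argument: a conditional version of Hoeffding's lemma applied one step at a time, then the Chernoff method.

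\textbf{Step 1 (conditional Hoeffding lemma).} For each $t$ we show that for every $s\in\mathbb R$,
\[
\mathbb E\big[e^{s m_t}\mid\mathcal F_{t-1}\big]\le \exp\big(s^2(b_t-a_t)^2/8\big)\le \exp\big(s^2c_t^2/8\big)\quad\text{a.s.}
\]
Since $a_t,b_t$ are $\mathcal F_{t-1}$-measurable, they can be frozen when conditioning. Convexity of $x\mapsto e^{sx}$ on $[a_t,b_t]$ gives
\[
e^{sm_t}\le \frac{b_t-m_t}{b_t-a_t}e^{sa_t}+\frac{m_t-a_t}{b_t-a_t}e^{sb_t}.
\]
On the event $\{b_t=a_t\}$ we have $m_t=a_t$ a.s., and the bound is trivial there. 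Taking conditional expectations and using $\mathbb E[m_t\mid\mathcal F_{t-1}]=0$ reduces the claim to the scalar inequality
\[
\frac{b}{b-a}e^{sa}-\frac{a}{b-a}e^{sb}\le e^{s^2(b-a)^2/8}\qquad\text{for }a\le 0\le b.
\]
This is the classical Hoeffding lemma, proved via $L(u)=-pu+\log(1-p+pe^u)$ with $L(0)=L'(0)=0$ and $L''\le 1/4$. Note that $a_t\le0\le b_t$ holds automatically: the conditional mean is zero and $m_t\in[a_t,b_t]$, so this is forced on a full-measure set. The only real subtlety here is measurability: working pointwise in $\omega$ with $a_t(\omega),b_t(\omega)$ treated as constants, via the affine bound above, avoids regular conditional distributions.

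\textbf{Step 2 (supermartingale and Chernoff).} Let $S_T=\sum_{t\le T}m_t$. By the tower property and Step 1, applied iteratively from $t=T$ down to $t=1$,
\[
\mathbb E[e^{-sS_T}]\le \exp\big(s^2V_T/8\big)\qquad\text{for } s>0.
\]
Equivalently, $\exp(-sS_t-s^2\sum_{i\le t}c_i^2/8)$ is a supermartingale. Markov's inequality then gives
\[
\mathbb P(S_T\le -x)\le \exp\big(-sx+s^2V_T/8\big).
\]
Optimizing with $s=4x/V_T$ yields $\exp(-2x^2/V_T)$ when $V_T>0$.

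The upper tail follows identically by applying the argument to $-m_t$, which has support $[-b_t,-a_t]$ of the same length. A union bound then gives the two-sided bound with factor $2$.

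\textbf{Step 3 (degenerate case).} If $V_T=0$, then every $c_t=0$, so $m_t=a_t=b_t$ a.s. The mean-zero condition then forces $m_t=0$ a.s. Hence $S_T=0$ a.s., and $\mathbb P(S_T\le -x)=0$ and $\mathbb P(|S_T|\ge x)=0$ for every $x>0$, matching the stated convention.

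The main obstacle is only Step 1, specifically its conditional form. Once the affine convexity bound is used, it is routine, so I expect no genuine difficulty.
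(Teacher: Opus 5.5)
Your proposal is correct and follows the same route as the paper: the conditional Hoeffding lemma makes $\exp(-sS_t - s^2\sum_{i\le t}c_i^2/8)$ a supermartingale, then Markov's inequality with $s=4x/V_T$, the symmetric argument for the upper tail plus a union bound, and the same handling of the degenerate case $V_T=0$. The only difference is that you also sketch a proof of the conditional Hoeffding lemma (the affine convexity bound, with the observation that $a_t\le 0\le b_t$ a.s.), which the paper simply invokes.
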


\begin{proof}[Proof of Lemma \ref{lem:hoeffding-length}]
If $V_T=0$, then $c_t=0$ for every $t\le T$. Hence $b_t-a_t=0$ a.s. for every $t\le T$, so $m_t=a_t=b_t$ a.s.; since $\mathbb E[m_t\mid\mathcal F_{t-1}]=0$, it follows that $m_t=0$ a.s. for every $t\le T$. The claimed inequalities are then trivial, so assume $V_T>0$.

Fix $t$ and $\eta\in\mathbb R$. Conditional on $\mathcal F_{t-1}$, the random variable $m_t$ has conditional mean zero and is supported in $[a_t,b_t]$, whose length is at most $c_t$. Conditional Hoeffding's lemma gives $\mathbb E[e^{\eta m_t}\mid\mathcal F_{t-1}]\le\exp(\eta^2c_t^2/8)$ a.s. For $\eta\in\mathbb R$, define
\[
    L_t(\eta)
    :=
    \exp\!\left(
        \eta\sum_{i=1}^t m_i
        -
        \frac{\eta^2}{8}\sum_{i=1}^t c_i^2
    \right),
    \qquad
    L_0(\eta):=1.
\]
Then $(L_t(\eta))_{t=0}^T$ is a nonnegative supermartingale, because
\[
    \mathbb E[L_t(\eta)\mid\mathcal F_{t-1}]
    =
    L_{t-1}(\eta)\exp\!\left(-\frac{\eta^2c_t^2}{8}\right)
    \mathbb E[e^{\eta m_t}\mid\mathcal F_{t-1}]
    \le
    L_{t-1}(\eta).
\]
Consequently $\mathbb E[L_T(\eta)]\le1$, equivalently $\mathbb E\exp(\eta\sum_{t=1}^T m_t)\le\exp(\eta^2V_T/8)$.

For the lower tail, take $\eta>0$ and apply Markov's inequality to $\exp(-\eta\sum_{t=1}^T m_t)$:
\[
    \mathbb P\!\left(\sum_{t=1}^T m_t\le -x\right)
    \le
    e^{-\eta x}\,
    \mathbb E\exp\!\left(-\eta\sum_{t=1}^T m_t\right)
    \le
    \exp\!\left(-\eta x+\frac{\eta^2V_T}{8}\right).
\]
Optimizing over $\eta>0$ gives $\eta^\star=4x/V_T$, and therefore $\mathbb P(\sum_{t=1}^T m_t\le -x)\le\exp(-2x^2/V_T)$.

The same argument applied to $-m_t$ gives $\mathbb P(\sum_{t=1}^T m_t\ge x)\le\exp(-2x^2/V_T)$. Combining the upper and lower tail bounds by the union bound yields $\mathbb P(|\sum_{t=1}^T m_t|\ge x)\le2\exp(-2x^2/V_T)$.
\end{proof}

\subsection{Missing results}
\label{apx:missing_results}

\begin{proposition}[Bounded martingale difference sequence property]
\label{prop:bounded-monitored-mds} If $P\in\mathcal P_0(\mathbb H)$, then $(Z_t)_{t\ge1}$ is a bounded martingale
difference sequence (MDS) with respect to $\mathbb H$: for every $t\ge1$,
\(
    Z_t \text{ is } \mathcal H_t\text{-measurable},
\)
\(
    Z_t\in\{-\alpha,1-\alpha\} \) \( P\text{-a.s.},
\)
and
\(
    \mathbb E_P[Z_t\mid\mathcal H_{t-1}]
    =
    0\) \( 
    P\text{-a.s.}
\)
\end{proposition}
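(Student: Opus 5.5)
The plan is to verify the three listed properties one at a time, directly from the definitions of the monitoring information structure and of $\mathcal P_0(\mathbb H)$ (Definition~\ref{def:strongnull}). Nothing beyond elementary properties of conditional expectation is needed.

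\emph{Measurability.} $\mathcal H_t$-measurability of $Z_t$ is built into the definition of a monitoring information structure, which requires that once the outcome at time $t$ is observed, the hit $Z_t$ is included in the monitored information. So the first claim is by assumption. I will also note the ambient fact that $Z_t$ is $\mathcal F_t$-measurable: $\hat q_t$ is $\mathcal F_{t-1}\subseteq\mathcal F_t$-measurable, $Y_t$ is $\mathcal F_t$-measurable, and hence $\{Y_t\le\hat q_t\}\in\mathcal F_t$.

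\emph{Boundedness.} Since $B_t=\mathbf 1\{Y_t\le\hat q_t\}\in\{0,1\}$ pointwise, we get $Z_t=B_t-\alpha\in\{-\alpha,1-\alpha\}$ everywhere, not merely $P$-a.s. It follows that $|Z_t|\le\max(\alpha,1-\alpha)<1$, so $Z_t$ is integrable and the conditional expectations below are well defined.

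\emph{Martingale-difference property.} By linearity of conditional expectation and the fact that constants are preserved,
\[
\mathbb E_P[Z_t\mid\mathcal H_{t-1}]=\mathbb E_P[\mathbf 1\{Y_t\le\hat q_t\}\mid\mathcal H_{t-1}]-\alpha=P(Y_t\le\hat q_t\mid\mathcal H_{t-1})-\alpha \quad P\text{-a.s.}
\]
For $P\in\mathcal P_0(\mathbb H)$, the right-hand side is $0$ $P$-a.s. for every $t$. The same identity, read in reverse, gives the converse, and hence the equivalence asserted in Definition~\ref{def:strongnull}.

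The only point needing care is that an MDS with respect to $\mathbb H$ requires adaptedness to $\mathbb H$ rather than merely to $\mathbb F$. This is exactly what the modeling assumption on $\mathbb H$ supplies; there is no genuine obstacle here.
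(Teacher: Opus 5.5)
Your proposal is correct and follows the paper's proof: you check measurability, the two-point range, and the conditional-mean identity $\mathbb E_P[Z_t\mid\mathcal H_{t-1}]=P(Y_t\le\hat q_t\mid\mathcal H_{t-1})-\alpha=0$, as the paper does. The one difference is the measurability step. The paper derives $\mathcal H_t$-measurability of $Z_t$ from $\mathcal H_{t-1}$-measurability of $\hat q_t$ and $\mathcal H_t$-measurability of $Y_t$, which are not stated in its setup. You take it directly from the stipulation that the monitoring structure contains the hit $Z_t$, which is a weaker requirement and closer to the paper's own definition.
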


\begin{proof}[Proof of Proposition \ref{prop:bounded-monitored-mds}]
Fix $t\ge1$. By construction,
$Z_t=\mathbf 1\{Y_t\le\widehat q_t\}-\alpha$. The measurability of $Z_t$ follows from the $\mathcal H_{t-1}$-measurability of $\hat q_t$, the $\mathcal H_t$-measurability of $Y_t$, and the inclusion $\mathcal H_{t-1}\subseteq\mathcal H_t$
Since $\mathbf 1\{Y_t\le\widehat q_t\}$ is binary, 
$Z_t\in\{-\alpha,1-\alpha\}$ and is bounded. Finally, for
$P\in\mathcal P_0(\mathbb H)$,
$\mathbb E_P[Z_t\mid\mathcal H_{t-1}]= P(Y_t\le\hat q_t\mid\mathcal H_{t-1})-\alpha=0$ $P$-a.s. for every $t\ge1$.
Thus $(Z_t)_{t\ge1}$ is a bounded martingale difference sequence with respect
to $\mathbb H$.
\end{proof}

\begin{proposition}[Oracle Kelly betting strategy]
\label{prop:oracle}
For each $t$, the extended-real maximizer of $\psi_t$ over $\Lambda_\alpha$ is
\(
    \lambda_t^\star
    =
    \frac{p_t-\alpha}{\alpha(1-\alpha)}.
    \label{eq:oracle-lambda}
\)
If $p_t\in(0,1)$, this maximizer lies in $\operatorname{ri}(\Lambda_\alpha)=(-1/(1-\alpha),\,1/\alpha)$ and both endpoints have value $-\infty$. If $p_t=0$ or $p_t=1$, the same formula gives the corresponding no-bankruptcy endpoint. In all cases, $\psi_t(\lambda_t^\star)=\KL(\Ber(p_t)\,\|\,\Ber(\alpha))$.
\end{proposition}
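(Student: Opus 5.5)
We take $\psi_t(\lambda):=\mathbb E[\log(1+\lambda Z_t)\mid\mathcal H_{t-1}]$ as the conditional expected log-growth and $p_t:=P(Y_t\le\hat q_t\mid\mathcal H_{t-1})$ as the conditional hit probability. Since $Z_t$ takes only the two values $1-\alpha$ and $-\alpha$, we can write
\[
\psi_t(\lambda)=p_t\log\bigl(1+\lambda(1-\alpha)\bigr)+(1-p_t)\log(1-\lambda\alpha),\qquad \lambda\in\Lambda_\alpha ,
\]
with the conventions $\log 0=-\infty$ and $0\cdot\log 0=0$. The proof is then a one-dimensional concave maximization. Each summand is concave on $\Lambda_\alpha$ and takes values in $[-\infty,\infty)$, so $\psi_t$ is concave as an extended-real function. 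It therefore suffices to identify a stationary point in the interior, or to argue monotonicity in the degenerate cases.

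\emph{Interior case $p_t\in(0,1)$.} At $\lambda=1/\alpha$ the second logarithm equals $-\infty$ and carries the weight $1-p_t>0$, so $\psi_t=-\infty$ there. Symmetrically, $\psi_t(-1/(1-\alpha))=-\infty$ because $p_t>0$. On the open interval $\psi_t$ is differentiable and strictly concave. Setting $\psi_t'(\lambda)=0$ gives
\[
\frac{p_t(1-\alpha)}{1+\lambda(1-\alpha)}=\frac{(1-p_t)\alpha}{1-\lambda\alpha}.
\]
Cross-multiplying yields $p_t(1-\alpha)-(1-p_t)\alpha=\lambda\alpha(1-\alpha)$, that is, $\lambda_t^\star=(p_t-\alpha)/(\alpha(1-\alpha))$. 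By strict concavity this is the unique maximizer. To see that it lies in the open interval, substitute back: $1+\lambda_t^\star(1-\alpha)=p_t/\alpha$ and $1-\lambda_t^\star\alpha=(1-p_t)/(1-\alpha)$, and both are strictly positive. The same two identities give the value directly:
\[
\psi_t(\lambda_t^\star)=p_t\log\frac{p_t}{\alpha}+(1-p_t)\log\frac{1-p_t}{1-\alpha}=\KL\bigl(\Ber(p_t)\,\|\,\Ber(\alpha)\bigr).
\]

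\emph{Degenerate cases.} If $p_t=0$, then $\psi_t(\lambda)=\log(1-\lambda\alpha)$, which is strictly decreasing on $\Lambda_\alpha$. Its maximizer is therefore the left endpoint $-1/(1-\alpha)$, and the formula gives exactly this, since $(0-\alpha)/(\alpha(1-\alpha))=-1/(1-\alpha)$. The value there is $\log(1/(1-\alpha))=\KL(\Ber(0)\,\|\,\Ber(\alpha))$. The case $p_t=1$ is symmetric: $\psi_t(\lambda)=\log(1+\lambda(1-\alpha))$ is increasing, the maximizer is $1/\alpha$, and the value is $\log(1/\alpha)=\KL(\Ber(1)\,\|\,\Ber(\alpha))$.

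\emph{Main obstacle.} The calculus itself is routine. The only real care needed is the extended-real bookkeeping at the endpoints, namely the conventions $0\cdot(-\infty)=0$ and $0\log 0=0$, so that a single formula covers all three cases.
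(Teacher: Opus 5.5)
Your proof is correct and follows the paper's argument essentially step for step: the two-point expansion of $\psi_t$, the first-order condition with strict concavity on the open interval, $-\infty$ at both endpoints when $p_t\in(0,1)$, monotonicity when $p_t\in\{0,1\}$, and substitution to obtain the KL value. Conditioning on $\mathcal H_{t-1}$ rather than the paper's $\mathcal F_{t-1}$ changes nothing, since the argument is pointwise in $p_t$. Your substitution identities also make explicit why $\lambda_t^\star$ lies in the open interval, which the paper only asserts.
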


\begin{proof}[Proof of Proposition \ref{prop:oracle}]
For a fixed stake $\lambda$, define the conditional expected log-growth
\(
    \psi_t(\lambda)
    :=
    \mathbb E[\log(1+\lambda Z_t)\mid\mathcal F_{t-1}]
    =
    p_t\log\{1+\lambda(1-\alpha)\}
    +(1-p_t)\log(1-\alpha\lambda),
\)
with conventions $0\log0:=0$ and $a\log0:=-\infty$ for $a>0$.
Fix $t$ and write $p:=p_t$. On $\operatorname{ri}(\Lambda_\alpha)$, $\psi_t$ is differentiable with derivative
\(
    \psi_t'(\lambda)
    =
    \frac{p(1-\alpha)}{1+\lambda(1-\alpha)}
    -
    \frac{(1-p)\alpha}{1-\alpha\lambda}.
\)
If $p\in(0,1)$, then $\psi_t$ is strictly concave on $\operatorname{ri}(\Lambda_\alpha)$, and solving $\psi_t'(\lambda)=0$ gives $\lambda=(p-\alpha)/\{\alpha(1-\alpha)\}$, which lies in $\operatorname{ri}(\Lambda_\alpha)$. At either endpoint one possible wealth multiplier is zero; since both outcomes have positive probability, the expected log-growth there is $-\infty$. If $p=0$, then $\psi_t(\lambda)=\log(1-\alpha\lambda)$ is decreasing and is maximized at $-1/(1-\alpha)$. If $p=1$, then $\psi_t(\lambda)=\log\{1+\lambda(1-\alpha)\}$ is increasing and is maximized at $1/\alpha$. Both endpoint cases agree with \eqref{eq:oracle-lambda}. Substituting \eqref{eq:oracle-lambda} gives $1+\lambda_t^\star(1-\alpha)=p/\alpha$ and $1-\alpha\lambda_t^\star=(1-p)/(1-\alpha)$, hence $\psi_t(\lambda_t^\star)=p\log(p/\alpha)+(1-p)\log\{(1-p)/(1-\alpha)\}=\KL(\Ber(p)\,\|\,\Ber(\alpha))$.
\end{proof}

\begin{lemma}[Quadratic lower bound on full-feature log-growth]
\label{lem:quadratic-contextual-edge}
For every law $Q$, every $\theta\in K$, and every $t\ge1$,
\[
\mathbb E_Q\!\left[
\log\!\left(1+\langle\theta,\phi_t\rangle Z_t\right)
\,\middle|\,
\mathcal H_{t-1}^\phi
\right]
\ge b_t^Q(\theta)
\qquad Q\text{-a.s.}
\]
Moreover, if $b_t^Q(\theta)>0$ on an event of positive probability, then $m_t^\phi\ne0$ on an event of positive probability. Hence any law admitting a uniformly positive full-feature quadratic edge violates $\mathcal P_0^\phi$.
\end{lemma}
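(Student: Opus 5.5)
The plan is to use an elementary pointwise inequality for the logarithm and then take conditional expectations. Fix $\theta\in K$ and $t\ge1$, and write $x:=\langle\theta,\phi_t\rangle Z_t$.

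\textbf{Step 1: the range of $x$.} The Cauchy--Schwarz bound already recorded in the definition of contextual bets gives $|x|\le 1/2$ surely.

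\textbf{Step 2: a pointwise bound.} I will use the standard inequality $\log(1+x)\ge x-x^2$ for $|x|\le 1/2$. To verify it, set $h(x):=\log(1+x)-x+x^2$, so that $h(0)=0$ and
\[
h'(x)=\frac{1}{1+x}-1+2x=\frac{x(1+2x)}{1+x}.
\]
On $[-1/2,0]$ we have $h'\le0$, and on $[0,1/2]$ we have $h'\ge0$, so $h\ge0$ on $[-1/2,1/2]$. This gives
\[
\log\bigl(1+\langle\theta,\phi_t\rangle Z_t\bigr)\ge \langle\theta,\phi_t\rangle Z_t-\langle\theta,\phi_t\rangle^2 Z_t^2 .
\]
Since $Z_t\in\{-\alpha,1-\alpha\}$, we have $Z_t^2\le1$, and therefore
\[
\log\bigl(1+\langle\theta,\phi_t\rangle Z_t\bigr)\ge \langle\theta,\phi_t\rangle Z_t-\langle\theta,\phi_t\rangle^2 .
\]

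\textbf{Step 3: conditional expectation.} Take $\mathbb E_Q[\,\cdot\mid\mathcal H_{t-1}^\phi]$ of both sides. All terms are bounded: the log term lies in $[\log(1/2),\log(3/2)]$. Because $\phi_t$ is $\mathcal H_{t-1}^\phi$-measurable, we can pull out the known factor, which gives $\mathbb E_Q[\langle\theta,\phi_t\rangle Z_t\mid\mathcal H_{t-1}^\phi]=\langle\theta,\phi_t\rangle m_t^\phi$. The right-hand side therefore becomes exactly $b_t^Q(\theta)$ $Q$-a.s.

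\textbf{Step 4: the second claim.} On the event $\{m_t^\phi=0\}$ we have $b_t^Q(\theta)=-\langle\theta,\phi_t\rangle^2\le0$. Hence $\{b_t^Q(\theta)>0\}\subseteq\{m_t^\phi\ne0\}$, up to null sets. So if the first event has positive probability, so does the second. By Definition~\ref{def:strongnull}, this means $Q\notin\mathcal P_0^\phi$.

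\textbf{Step 5: the final statement.} A uniformly positive edge, $b_t^Q(\theta^\star)\ge\kappa>0$ a.s. for some $t$, is a special case of Step 4.

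\textbf{Main obstacle.} The only real care needed is the domain of the elementary inequality. The constant $1/2$ coming from $K$ is exactly what makes $\log(1+x)\ge x-x^2$ hold; the inequality fails for $x$ near $-1$. The measurability needed to pull $\langle\theta,\phi_t\rangle$ out of the conditional expectation is routine.
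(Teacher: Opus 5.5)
Your proposal is correct and follows essentially the same argument as the paper. Both use the elementary bound $\log(1+x)\ge x-x^2$ on $[-1/2,1/2]$ together with $Z_t^2\le 1$, pull the $\mathcal H_{t-1}^\phi$-measurable stake $\langle\theta,\phi_t\rangle$ out of the conditional expectation, and observe that $b_t^Q(\theta)=-\langle\theta,\phi_t\rangle^2\le 0$ on $\{m_t^\phi=0\}$. Your derivative check via $h'(x)=x(1+2x)/(1+x)$ verifies the elementary inequality, which the paper uses without proof.
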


\begin{proof}
Set $\lambda_t=\langle\theta,\phi_t\rangle$. Since $|\lambda_t Z_t|\le1/2$, the elementary inequality $\log(1+x)\ge x-x^2$ for $x\in[-1/2,1/2]$ gives $\log(1+\lambda_t Z_t)\ge \lambda_t Z_t-\lambda_t^2 Z_t^2$. Taking conditional expectation given $\mathcal H_{t-1}^\phi$, and using that $\lambda_t$ is $\mathcal H_{t-1}^\phi$-measurable, yields
\[
\begin{aligned}
\mathbb E_Q\!\left[
\log(1+\lambda_t Z_t)
\,\middle|\,
\mathcal H_{t-1}^\phi
\right]
&\ge
\lambda_t\mathbb E_Q[Z_t\mid\mathcal H_{t-1}^\phi]
-
\lambda_t^2\mathbb E_Q[Z_t^2\mid\mathcal H_{t-1}^\phi] \\
&\ge
\lambda_t m_t^\phi-\lambda_t^2,
\end{aligned}
\]
because $Z_t^2\le1$. This is exactly $b_t^Q(\theta)$.
If $m_t^\phi=0$, then $b_t^Q(\theta)=-\langle\theta,\phi_t\rangle^2\le0$. Therefore $b_t^Q(\theta)>0$ implies $m_t^\phi\ne0$. A uniformly positive edge therefore implies failure of the full-feature calibration null.
\end{proof}

\begin{proposition}[Linear full-feature alternatives are full-feature alternatives]
\label{prop:linear-alt-subset-full-alt}
For every $\kappa>0$ and $t_0\in\mathbb N$,
we have that linear predictable-edge alternative is a subclass of $\mathcal Q^{\phi}$, i.e.,
\(
    \mathcal Q_{\kappa,t_0}^{\phi,\operatorname{lin}}(\Phi)
    \subseteq
    \mathcal P\setminus \mathcal P_0^\phi.
\)
\end{proposition}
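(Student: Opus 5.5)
The plan is to use Lemma~\ref{lem:quadratic-contextual-edge}, which has already done the main analytical work. Fix $\kappa>0$, $t_0\in\mathbb N$, and $Q\in\mathcal Q_{\kappa,t_0}^{\phi,\operatorname{lin}}(\Phi)$, and assume $Q\in\mathcal P$, since the alternative classes are subsets of the ambient model class. It suffices to show $Q\notin\mathcal P_0^\phi$. That is, we need one time $t$ such that $m_t^\phi=\mathbb E_Q[Z_t\mid\mathcal H_{t-1}^\phi]\neq0$ on an event of positive $Q$-probability.

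\textbf{Step 1 (pick a time and comparator).} By Definition~\ref{def:linear-full-feature-alternative}, there is $\theta^\star\in K$ with $b_t^Q(\theta^\star)\ge\kappa>0$ $Q$-a.s. for every $t\ge t_0$. We take $t=t_0$.

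\textbf{Step 2 (pointwise argument).} Recall that
\[
b_{t_0}^Q(\theta^\star)=\langle\theta^\star,\phi_{t_0}\rangle m_{t_0}^\phi-\langle\theta^\star,\phi_{t_0}\rangle^2 .
\]
On the event $\{m_{t_0}^\phi=0\}$ this equals $-\langle\theta^\star,\phi_{t_0}\rangle^2\le0<\kappa$. This is exactly the observation in the proof of Lemma~\ref{lem:quadratic-contextual-edge}.

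\textbf{Step 3 (conclude).} The edge condition holds $Q$-a.s., so $Q(m_{t_0}^\phi=0)=0$. Hence $m_{t_0}^\phi\neq0$ $Q$-a.s., which is an event of probability one and in particular positive probability. By the characterization following the definition of $\mathcal Q^a$, or equivalently by the MDS characterization in Definition~\ref{def:strongnull}, $Q\notin\mathcal P_0^\phi$. Therefore $Q\in\mathcal P\setminus\mathcal P_0^\phi=\mathcal Q^\phi$.

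\textbf{Technical points.} There is no real obstacle here. Two points need care. First, $m_t^\phi$ is defined only up to $Q$-null sets, so all statements must be made $Q$-a.s. Second, the class must be read as intersected with $\mathcal P$. The conditional expectation is well defined because $|Z_t|\le1$.
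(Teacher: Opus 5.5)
Your proposal is correct and is essentially the paper's proof. Both take $\theta^\star$ from Definition~\ref{def:linear-full-feature-alternative} and use the observation in Lemma~\ref{lem:quadratic-contextual-edge} that $b_t^Q(\theta^\star)=-\langle\theta^\star,\phi_t\rangle^2\le0$ wherever $m_t^\phi=0$, which forces $m_t^\phi\neq0$ with positive (indeed full) $Q$-probability for $t\ge t_0$, so $Q\notin\mathcal P_0^\phi$. Your explicit remarks that the class must be read inside $\mathcal P$ and that statements hold only $Q$-a.s. are sensible clarifications that the paper leaves implicit.
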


\begin{proof}[Proof of Proposition \ref{prop:linear-alt-subset-full-alt}]
Let $Q\in\mathcal Q_{\kappa,t_0}^{\phi,\operatorname{lin}}(\Phi)$. Then there exists $\theta^\star\in K$ such that $b_t^Q(\theta^\star)\ge\kappa>0$ $Q$-a.s. for every $t\ge t_0$. By Lemma~\ref{lem:quadratic-contextual-edge}, $b_t^Q(\theta^\star)>0$ implies $m_t^\phi\ne0$. Therefore $\mathbb E_Q[Z_t\mid\mathcal H_{t-1}^\phi]=m_t^\phi\ne0$ with positive probability for every $t\ge t_0$. Hence $Q\notin\mathcal P_0^\phi$.
\end{proof}

\begin{definition}[Cumulative full-feature edge alternative]
\label{def:cumulative-full-feature-edge-alternative}
Fix a predictable feature sequence
$\Phi=(\phi_t)_{t\ge1}$ and a deterministic real sequence
$D=(D_T)_{T\ge1}$. For $\theta\in K$, define the predictable cumulative
full-feature edge
\(
    B_T^Q(\theta)
    :=
    \sum_{t=1}^T b_t^Q(\theta),
\)
where
\(
    b_t^Q(\theta)
    :=
    \langle\theta,\phi_t\rangle m_t^\phi
    -
    \langle\theta,\phi_t\rangle^2,
\)
and
\(
    m_t^\phi
    :=
    \mathbb E_Q[Z_t\mid \mathcal H_{t-1}^\phi].
\)
The cumulative full-feature edge alternative with lower envelope $D$ is
\[
    \mathcal Q_{\rm cum}^{\phi}(D;\Phi)
    :=
    \left\{
        Q:\;
        \exists \theta^\star\in K
        \text{ such that }
        B_T^Q(\theta^\star)
        \ge
        D_T
        \quad
        Q\text{-a.s. for every }T\ge1
    \right\}.
\]
\end{definition}

\begin{theorem}[Full-feature contextual regret-to-power under cumulative edge]
\label{thm:full-feature-regret-to-power}
Assume that the online learner outputs
$\mathcal H_{t-1}^\phi$-measurable vectors $\theta_t\in K$ and satisfies the
pathwise regret bound $R_T^{\rm ctx}(\theta)\le r_T$ for every $\theta\in K$ and every $T\ge1$, where $(r_T)_{T\ge1}$ is deterministic. Define
\(
    \tau_\gamma^{\rm ctx}
    :=
    \inf\{T\ge1:M_T^{\rm ctx}\ge1/\gamma\}.
\)
Fix $\gamma\in(0,1)$ and a deterministic lower envelope
$D=(D_T)_{T\ge1}$. Suppose
\(
    Q\in \mathcal Q_{\rm cum}^{\phi}(D;\Phi).
\)
Define
\(
    s_T(D)
    :=
    D_T-r_T-\log(1/\gamma).
\)
Then, for every $T\ge1$ such that $s_T(D)>0$, we have
\[
    Q(\tau_\gamma^{\rm ctx}>T)
    \le
    \exp\!\left(
        -\frac{2s_T(D)^2}{T}
    \right).
\]
\end{theorem}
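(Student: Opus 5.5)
Our plan is to lower bound $\log M_T^{\rm ctx}$ by the comparator's log-wealth minus regret. We then split the comparator's log-wealth into a predictable drift plus a martingale. The drift is controlled by the cumulative edge, and the martingale by Lemma~\ref{lem:hoeffding-length}.

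Fix $\theta^\star\in K$ from the definition of $\mathcal Q_{\rm cum}^{\phi}(D;\Phi)$. By definition of regret, $\log M_T^{\rm ctx}=-\sum_{t\le T}g_t(\theta_t)\ge -\sum_{t\le T}g_t(\theta^\star)-r_T$ pathwise. Write $\ell_t:=\log(1+\langle\theta^\star,\phi_t\rangle Z_t)$. Since $\theta^\star$ is fixed and $\phi_t$ is $\mathcal H^\phi_{t-1}$-measurable, Lemma~\ref{lem:quadratic-contextual-edge} gives $\mathbb E_Q[\ell_t\mid\mathcal H^\phi_{t-1}]\ge b_t^Q(\theta^\star)$. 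Set $\xi_t:=\ell_t-\mathbb E_Q[\ell_t\mid\mathcal H^\phi_{t-1}]$. Then
\[
\log M_T^{\rm ctx}\ \ge\ \sum_{t\le T}\xi_t+B_T^Q(\theta^\star)-r_T\ \ge\ \sum_{t\le T}\xi_t+D_T-r_T \qquad Q\text{-a.s.}
\]

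Next we argue that $\{\tau_\gamma^{\rm ctx}>T\}\subseteq\{M_T^{\rm ctx}<1/\gamma\}$. On this event, $\sum_{t\le T}\xi_t<\log(1/\gamma)-D_T+r_T=-s_T(D)$. It remains to bound $Q(\sum_{t\le T}\xi_t\le -s_T(D))$ via Lemma~\ref{lem:hoeffding-length} with $x=s_T(D)>0$.

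The $\xi_t$ form an MDS with respect to $\mathbb H^\phi$. Here we use that $Z_t$ is $\mathcal H^\phi_t$-measurable and that $\ell_t$ is bounded, since $|\langle\theta^\star,\phi_t\rangle Z_t|\le 1/2$. The step we expect to be the main obstacle is obtaining support length $c_t\le 1$, which is needed to match the rate $\exp(-2s_T^2/T)$. We would argue it as follows. Conditionally on $\mathcal H^\phi_{t-1}$, $Z_t$ takes only the two values $-\alpha$ and $1-\alpha$. Hence $\ell_t$ takes the two values $\log(1-\alpha\lambda)$ and $\log(1+(1-\alpha)\lambda)$ with $\lambda=\langle\theta^\star,\phi_t\rangle$. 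These give predictable endpoints $a_t,b_t$ after subtracting the predictable conditional mean.

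Their difference is $|\log\frac{1+(1-\alpha)\lambda}{1-\alpha\lambda}|$. Let $u=(1-\alpha)\lambda$ and $v=-\alpha\lambda$. Both arguments lie in $[1/2,3/2]$, and $|u-v|=|\lambda|\le 1/2$ because $\|\theta^\star\|\|\phi_t\|\le 1/2$. The derivative of $\log$ on $[1/2,3/2]$ is at most $2$, so the range is at most $2|\lambda|\le1$. Thus $c_t=1$, $V_T=T$, and Lemma~\ref{lem:hoeffding-length} yields $Q(\tau_\gamma^{\rm ctx}>T)\le\exp(-2s_T(D)^2/T)$.

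If the predictable-range argument ran into trouble, we would fall back on the crude bound $\ell_t\in[\log(1/2),\log(3/2)]$. That gives range $\log 3\approx1.1$ and a slightly worse constant, so the exact constant rests on the two-point structure above.
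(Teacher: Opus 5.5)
Your proposal is correct. Its skeleton is the paper's: regret against $\theta^\star$, a drift-plus-martingale decomposition, the inclusion $\{\tau_\gamma^{\rm ctx}>T\}\subseteq\{\log M_T^{\rm ctx}<\log(1/\gamma)\}$, and Lemma~\ref{lem:hoeffding-length} with $c_t=1$. The difference is which quantity you center.

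The paper applies $\log(1+x)\ge x-x^2$ \emph{pathwise}, before any conditional expectation, to get $\log M_T^{\rm ctx}\ge\sum_{t\le T}\{\lambda_t^\star Z_t-(\lambda_t^\star)^2\}-r_T$. It then centers only the linear term, $\Delta S_t=\lambda_t^\star(Z_t-m_t^\phi)$. Because $Z_t$ is two-point, the conditional range of $\Delta S_t$ is exactly $|\lambda_t^\star|\le 1/2$. So the step you flagged as the main obstacle is immediate there, with slack: one could even take $c_t=1/2$ and obtain the sharper tail $\exp(-8s_T(D)^2/T)$.

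You instead center the log-increment $\ell_t$ itself and use the quadratic bound only inside the conditional mean, via Lemma~\ref{lem:quadratic-contextual-edge}. This forces the mean-value estimate $|\log\frac{1+(1-\alpha)\lambda}{1-\alpha\lambda}|\le 2|\lambda|\le 1$. That estimate is valid (the true range is below $\log 2$), but it lands exactly on the constant you need.

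In exchange, your drift is the exact conditional log-growth $\mathbb E_Q[\ell_t\mid\mathcal H^\phi_{t-1}]\ge b_t^Q(\theta^\star)$. Your argument therefore goes through verbatim if the envelope $D_T$ is imposed on cumulative expected log-growth (e.g., Kelly/KL-type growth as in Proposition~\ref{prop:oracle}) rather than on the quadratic surrogate $b_t^Q$. That is a weaker assumption on the alternative.
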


\begin{proof}[Proof of Theorem \ref{thm:full-feature-regret-to-power}]
Set $\lambda_t^\star:=\langle\theta^\star,\phi_t\rangle$. By regret against $\theta^\star$,
\[
    \log M_T^{\rm ctx}
    =
    -\sum_{t=1}^T g_t(\theta_t)
    \ge
    -\sum_{t=1}^T g_t(\theta^\star)-r_T
    =
    \sum_{t=1}^T\log(1+\lambda_t^\star Z_t)-r_T.
\]
Because $|\lambda_t^\star Z_t|\le1/2$, $\log(1+\lambda_t^\star Z_t)\ge\lambda_t^\star Z_t-(\lambda_t^\star)^2$, and therefore $\log M_T^{\rm ctx}\ge\sum_{t=1}^T\{\lambda_t^\star Z_t-(\lambda_t^\star)^2\}-r_T$.

Define the martingale difference $\Delta S_t:=\lambda_t^\star Z_t-\mathbb E_Q[\lambda_t^\star Z_t\mid\mathcal H_{t-1}^\phi]$ and set $S_T:=\sum_{t=1}^T\Delta S_t$. 
The sequence
$(\Delta S_t)_{t\ge1}$ is a martingale-difference sequence with respect to
$\mathbb H^\phi$, since
\(
    \mathbb E_Q[\Delta S_t\mid\mathcal H_{t-1}^\phi]
    =
    \mathbb E_Q[
        \lambda_t^\star Z_t
        -
        \mathbb E_Q[\lambda_t^\star Z_t\mid\mathcal H_{t-1}^\phi]
        \mid
        \mathcal H_{t-1}^\phi
    ]
    =
    0.
\)
Since $\lambda_t^\star$ is $\mathcal H_{t-1}^\phi$-measurable, $\mathbb E_Q[\lambda_t^\star Z_t\mid\mathcal H_{t-1}^\phi]=\lambda_t^\star m_t^\phi$. Hence
\[
    \sum_{t=1}^T\{\lambda_t^\star Z_t-(\lambda_t^\star)^2\}
    =
    S_T+\sum_{t=1}^T\{\lambda_t^\star m_t^\phi-(\lambda_t^\star)^2\}
    =
    S_T+\sum_{t=1}^T b_t^Q(\theta^\star).
\]
Thus $\log M_T^{\rm ctx}\ge S_T+\sum_{t=1}^T b_t^Q(\theta^\star)-r_T\ge S_T+D_T-r_T$. On the event $\{\tau_\gamma^{\rm ctx}>T\}$, one has $\log M_T^{\rm ctx}<\log(1/\gamma)$ due to monotonocity of the logarithm, and so $S_T<\log(1/\gamma)+r_T-D_T=-s_T(D)$. Therefore $\{\tau_\gamma^{\rm ctx}>T\}\subseteq\{S_T<-s_T(D)\}$.

It remains to control the lower tail of $S_T$. Conditional on $\mathcal H_{t-1}^\phi$, the variable
$\lambda_t^\star Z_t$ takes values in
\(
    \{-\alpha\lambda_t^\star,\,(1-\alpha)\lambda_t^\star\},
\)
and therefore its conditional support has length
\(
    |(1-\alpha)\lambda_t^\star-(-\alpha\lambda_t^\star)|
    =
    |\lambda_t^\star|.
\)
Centering by
$\mathbb E_Q[\lambda_t^\star Z_t\mid\mathcal H_{t-1}^\phi]$
only translates this conditional support, so the centered increment
$\Delta S_t$ also has conditional support length $|\lambda_t^\star|$.
Since $\theta^\star\in K$ and $\|\phi_t\|_2\le R$,
\(
    |\lambda_t^\star|
    =
    |\langle\theta^\star,\phi_t\rangle|
    \le
    \|\theta^\star\|_2\|\phi_t\|_2
    \le
    \frac{1}{2R}R
    =
    \frac12
    \le 1.
\)

Applying Hoeffding--Azuma's inequality for martingales with conditional range lengths
(Lemma \ref{lem:hoeffding-length} with $c_t=1$ for every $t$) gives for every $x>0$,
\(
    Q(S_T\le -x)
    \le
    \exp\!\left(
        -\frac{2x^2}{\sum_{t=1}^T 1^2}
    \right)
    =
    \exp\!\left(-\frac{2x^2}{T}\right).
\)
Taking $x=s_T(D)>0$ proves the result.
\end{proof}

\begin{remark}[Lower envelopes as non-i.i.d.~drift conditions]
\label{rem:lower-envelope-noniid-drift}
The lower envelope $D_T$ in Definition~\ref{def:cumulative-full-feature-edge-alternative} is a non-i.i.d.~analogue of a positive drift condition. To make the comparison precise, recall the simple i.i.d.~likelihood-ratio setting: let $X_1,X_2,\ldots$ be i.i.d.~under an alternative law $Q_1$, and let $P_0$ be a simple null. If $\ell(X):=\log(dQ_1/dP_0)(X)$, then under $Q_1$, $\mathbb E_{Q_1}[\ell(X_1)]={\rm KL}(Q_1\Vert P_0)$, where ${\rm KL}$ denotes the Kullback--Leibler information \citep{KullbackLeibler1951Information}. Hence $\mathbb E_{Q_1}[\sum_{t=1}^T\ell(X_t)]=T\,{\rm KL}(Q_1\Vert P_0)$. Thus, in the i.i.d.~likelihood-ratio problem, cumulative evidence has a linear expected drift with per-observation rate ${\rm KL}(Q_1\Vert P_0)$. This is the classical signal rate underlying sequential likelihood-ratio testing \citep{Wald1945Sequential} and the i.i.d.~Chernoff--Stein Lemma \citep[Theorem~11.8.3]{CoverThomas2006Elements}.
In the present problem, there need not be a repeated observation law, a stationary distribution, or independent increments. The one-step predictable edge $b_t^Q(\theta)=\langle\theta,\phi_t\rangle m_t^\phi-\langle\theta,\phi_t\rangle^2$ is $\mathcal H_{t-1}^\phi$-measurable and may vary with time, covariates, and the realized past. Consequently, there is generally no single population drift parameter analogous to ${\rm KL}(Q_1\Vert P_0)$. The deterministic lower envelope
\(
    B_T^Q(\theta^\star)
    :=
    \sum_{t=1}^T b_t^Q(\theta^\star)
    \ge
    D_T
\)
\(
    Q\text{-a.s. for every }T\ge1
\)
records, instead, the cumulative amount of predictable feature-aligned miscalibration available to a fixed comparator $\theta^\star$ along the possibly non-i.i.d. stream.

\end{remark}

\begin{remark}[Connection to linear-full feature alternative in Theorem \ref{thm:power-linear-full-feature}]
\label{remark:connection_to_uniform_edge_cumulative}
The class
$\mathcal Q_{\kappa,t_0}^{\phi,\operatorname{lin}}(\Phi)$ 
that we studied in Definition \ref{def:linear-full-feature-alternative} Theorem \ref{thm:power-linear-full-feature},
is a simple
uniform-edge subclass of the more general cumulative-edge alternative studied here. From the proof of Theorem \ref{thm:power-linear-full-feature}, we can observe that
\(
    \mathcal Q_{\kappa,t_0}^{\phi,\operatorname{lin}}(\Phi)
    \subseteq
    \mathcal Q_{\rm cum}^{\phi}(D^{\kappa,t_0};\Phi),
\)
with
\(
D_T^{\kappa,t_0}:=\kappa T-2(t_0-1).
\)
In this case, the envelope $D_T^{\kappa,t_0}$ is linear, but the data stream may still be non-i.i.d.
It asserts only linear growth of cumulative predictable edge. Such growth may arise from i.i.d. repetitions, but it may also arise from nonstationary or history-dependent streams, including changepoints and time-varying feature distributions. 
\end{remark}

\subsection{Missing proofs}

\begin{proof}[Proof of Proposition \ref{prop:null-hierarchy}]
Fix $P\in\mathcal P_0(\mathbb G)$. Then, for every $t$,
$\mathbb E_P[Z_t\mid\mathcal G_{t-1}]=0$ $P$-a.s. Since
$\mathcal H_{t-1}\subseteq\mathcal G_{t-1}$, the tower property yields
$\mathbb E_P[Z_t\mid\mathcal H_{t-1}]
=\mathbb E_P[\mathbb E_P[Z_t\mid\mathcal G_{t-1}]\mid\mathcal H_{t-1}]
=0$ $P$-a.s. Hence $P\in\mathcal P_0(\mathbb H)$. Applying this implication
along the filtration chain
$\mathbb H^{\rm marg}\subseteq\mathbb H^{w}\subseteq\mathbb H^{\phi}$
gives the displayed hierarchy.
\end{proof}

\begin{proof}[Proof of Theorem \ref{thm:validity-monitoring}]
Fix $P\in\mathcal P_0(\mathbb H)$. Nonnegativity follows from
$\lambda_t\in\Lambda_\alpha$ and $Z_t\in\{-\alpha,1-\alpha\}$. The process is
adapted to the monitored information because $M_{t-1}$ and $\lambda_t$ are
$\mathcal H_{t-1}$-measurable and $Z_t$ is $\mathcal H_{t}$-measurable. For each finite $t$, $M_t$ is integrable by induction, since each multiplier is
bounded. Using $\mathbb H$-predictability and the definition of
$\mathcal P_0(\mathbb H)$,
\[
\begin{aligned}
    \mathbb E_P[M_t\mid\mathcal H_{t-1}]
    &=
    \mathbb E_P[
        M_{t-1}(1+\lambda_t Z_t)
        \mid
        \mathcal H_{t-1}
    ]                                                    \\
    &=
    M_{t-1}
    \left\{
        1+\lambda_t
        \mathbb E_P[Z_t\mid\mathcal H_{t-1}]
    \right\}                                             \\
    &=
    M_{t-1}.
\end{aligned}
\]
Here, the second equality is because $M_{t-1}$ is $\mathcal H_{t-1}$-measurable. The final equality due to the bounded martingale difference sequence (MDS) property of $(Z_t)_{t\ge1}$ with respect to $\mathbb H$, (Proposition \ref{prop:bounded-monitored-mds}).
Thus $(M_t)$ is a nonnegative $P$-martingale with initial value $M_0=1$.
Since $P$ was arbitrary in $\mathcal P_0(\mathbb H)$, the process is a test
martingale for the composite null.
\end{proof}

\begin{proof}[Proof of Corollary \ref{cor:anytime-valid-monitoring}]
For every $P\in\mathcal P_0(\mathbb H)$,
Theorem~\ref{thm:validity-monitoring} shows that $(M_t)$ is a nonnegative $P$-martingale with $M_0=1$, and hence a nonnegative supermartingale. Ville's inequality for nonnegative supermartingales gives
\[
    P\!\left(\sup_{t\ge0}M_t\ge c\right)
    \le
    \frac{\mathbb E_P[M_0]}{c}
    =
    \frac1c.
\]
Taking $c=1/\gamma$ and using $\{\tau_\gamma<\infty\}=\{\sup_{t\ge1}M_t\ge1/\gamma\}\subseteq\{\sup_{t\ge0}M_t\ge1/\gamma\}$ proves the claim.
\end{proof}

\begin{proof}[Proof of Corollary \ref{cor:validity-transfer}]
Fix any $P\in\mathcal P_0(\mathbb G)$. Since
$\mathcal H_{t-1}\subseteq\mathcal G_{t-1}$, any $\mathcal H_{t-1}$-measurable bet $\lambda_t$ is also
$\mathcal G_{t-1}$-measurable. In other words, any $\mathbb H$-predictable betting process is automatically $\mathbb G$-predictable. Moreover, by definition of
$\mathcal P_0(\mathbb G)$,
\[
    \mathbb E_P[Z_t\mid \mathcal G_{t-1}]=0
    \qquad
    P\text{-a.s.}
\]
The no-bankruptcy constraint gives $1+\lambda_t Z_t\ge0$ because
$Z_t\in\{-\alpha,1-\alpha\}$ and $\lambda_t\in\Lambda_\alpha$. Hence
$M_t\ge0$. Because \(M_{t-1}\) and \(\lambda_t\) are both \(\mathcal{G}_{t-1}\)-measurable, we may pull them out when conditioning on \(\mathcal{G}_{t-1}\).
Therefore,
\[
\begin{aligned}
    \mathbb E_P[M_t\mid\mathcal G_{t-1}]
    &=
    \mathbb E_P[
        M_{t-1}(1+\lambda_t Z_t)
        \mid
        \mathcal G_{t-1}
    ] \\
    &=
    M_{t-1}
    \left(
        1+\lambda_t\mathbb E_P[Z_t\mid\mathcal G_{t-1}]
    \right) \\
    &=
    M_{t-1}.
\end{aligned}
\]
Thus $(M_t)_{t\ge0}$ is a nonnegative $P$-martingale with respect to
$\mathbb G$ for every $P\in\mathcal P_0(\mathbb G)$. Consequently it is a test
martingale, and hence an e-process, for the stronger null
$\mathcal P_0(\mathbb G)$.
Finally, since
\(
    \mathcal H_{t-1}^{\rm marg}
    \subseteq
    \mathcal H_{t-1}^w
    \subseteq
    \mathcal H_{t-1}^\phi,
\)
the preceding argument applies first with
$(\mathbb H,\mathbb G)=(\mathbb H^{\rm marg},\mathbb H^w)$, then with
$(\mathbb H^{\rm marg},\mathbb H^\phi)$, and finally with
$(\mathbb H^w,\mathbb H^\phi)$. This proves the stated transfer of validity.
\end{proof}

\begin{proof}[Proof of Proposition \ref{prop:coarser-tests-can-miss}]
Since $Q\notin\mathcal P_0(\mathbb G)$, the stronger
$\mathbb G$-conditional calibration null fails. Since
$Q\in\mathcal P_0(\mathbb H)$, Theorem~\ref{thm:validity-monitoring} implies
that every $\mathbb H$-predictable no-bankruptcy wealth process is a
nonnegative $Q$-martingale with respect to $\mathbb H$. Ville's inequality then
gives $Q(\tau_\gamma<\infty)\le\gamma$.
It remains to prove the expected log-wealth bound. For each $t$, we have
\[
    \mathbb E_Q[1+\lambda_tZ_t\mid\mathcal H_{t-1}]
    =
    1+\lambda_t\mathbb E_Q[Z_t\mid\mathcal H_{t-1}]
    =
    1.
\]
Since $\log$ is concave, Jensen's inequality gives
\[
    \mathbb E_Q[
        \log(1+\lambda_tZ_t)
        \mid
        \mathcal H_{t-1}
    ]
    \le
    \log
    \mathbb E_Q[
        1+\lambda_tZ_t
        \mid
        \mathcal H_{t-1}
    ]
    =
    0.
\]
Summing from $t=1$ to $T$ gives the following
\[
    \mathbb E_Q[\log M_T]
    =
    \sum_{t=1}^T
    \mathbb E_Q[\log(1+\lambda_tZ_t)]
    \le
    0.
\]
Thus, a coarser skeptic has no positive systematic log-growth under $Q$, even
though $Q$ violates calibration at the richer information level.
\end{proof}

\begin{proof}[Proof of Theorem \ref{thm:power-linear-full-feature}]
Since $Q\in\mathcal Q_{\kappa,t_0}^{\phi,\operatorname{lin}}(\Phi)$, there exists $\theta^\star\in K$ such that $b_t^Q(\theta^\star)\ge\kappa$ $Q$-a.s. for every $t\ge t_0$. For $t<t_0$, write $\lambda_t^\star=\langle\theta^\star,\phi_t\rangle$. Since $|\lambda_t^\star|\le1/2$ and $m_t^\phi\in[-\alpha,1-\alpha]\subseteq[-1,1]$, 
\[
    b_t^Q(\theta^\star)
    =
    \lambda_t^\star m_t^\phi-(\lambda_t^\star)^2
    \ge
    -|\lambda_t^\star|-(\lambda_t^\star)^2
    \ge
    -\frac12-\frac14
    =
    -\frac34
    \ge
    -1.
\]
Therefore, for every $T\ge t_0$, $\sum_{t=1}^T b_t^Q(\theta^\star)\ge-(t_0-1)+\kappa(T-t_0+1)\ge\kappa T-2(t_0-1)=:D_T$, where the last inequality uses $\kappa\le1$. If $T\ge n_\gamma^{\rm ctx}(r)$, then by definition $\log(1/\gamma)+2(t_0-1)+r_T\le\kappa T/2$, and hence $s_T(D)=D_T-r_T-\log(1/\gamma)\ge\kappa T/2$. Theorem~\ref{thm:full-feature-regret-to-power} gives
\[
    Q(\tau_\gamma^{\rm ctx}>T)
    \le
    \exp\!\left(-\frac{2(\kappa T/2)^2}{T}\right)
    =
    \exp\!\left(-\frac{\kappa^2T}{2}\right).
\]
Since the events $\{\tau_\gamma^{\rm ctx}>T\}$ decrease to $\{\tau_\gamma^{\rm ctx}=\infty\}$, the exponential bound implies $Q(\tau_\gamma^{\rm ctx}<\infty)=1$.

Finally, let $n=n_\gamma^{\rm ctx}(r)$ and write
$\tau=\tau_\gamma^{\rm ctx}$. By the tail-sum formula for nonnegative
integer-valued random variables,
\(
    \mathbb E_Q[\tau]
    =
    \sum_{T=0}^{\infty} Q(\tau>T).
\)
For $T<n$, we use the trivial bound $Q(\tau>T)\le1$, while for
$T\ge n$ the preceding exponential tail bound gives
\(
    Q(\tau>T)
    \le
    \exp\!\left(-\frac{\kappa^2T}{2}\right).
\)
Hence
\[
\begin{aligned}
    \mathbb E_Q[\tau]
    &\le
    n+
    \sum_{T=n}^{\infty}
    \exp\!\left(-\frac{\kappa^2T}{2}\right)
    =
    n+
    \sum_{T=n}^{\infty}
    \left(e^{-\kappa^2/2}\right)^T 
    =
    n+
    \frac{e^{-\kappa^2 n/2}}{1-e^{-\kappa^2/2}} 
    \le
    n+
    \frac{1}{1-e^{-\kappa^2/2}}.
\end{aligned}
\]
Since $\kappa\in(0,1]$, $x=\kappa^2/2\in(0,1/2]$, and the elementary
inequality $1-e^{-x}\ge x/2$ gives
\(
    1-e^{-\kappa^2/2}
    \ge
    \frac{\kappa^2}{4}.
\)
Therefore, we obtain the desired result
\(
    \mathbb E_Q[\tau_\gamma^{\rm ctx}]
    \le
    n_\gamma^{\rm ctx}(r)+\frac{4}{\kappa^2}.
\)

\end{proof}

\section{Experimental details}
\label{app:experiments}

\subsection{Simulation protocol}
\label{app:simulation_protocol}

\paragraph{Synthetic Negative-Binomial panel.}
The validity experiment and the synthetic-data half of the real-data experiment (both in Section~\ref{sec:experiments}) share a panel of $N{=}1{,}000$ count series over $T{=}1{,}460$ days. For each series and day $t$, demand follows
\[
D_t \sim \mathrm{NegBin}(n_t, p_t), \qquad
\mu_t = \exp\bigl(x_t^\top \beta\bigr), \qquad
p_t = \frac{n_t}{n_t + \mu_t},
\]
with eight-dimensional covariate vector
\[
x_t = \bigl(1,\ \sin^{(w)}_t,\ \cos^{(w)}_t,\ \mathrm{promo}_t,\ \log\!\mathrm{price}_t,\ \sin^{(y)}_t,\ \cos^{(y)}_t,\ g_t\bigr).
\]
The components are:
\begin{itemize}
  \item Weekly seasonality $\sin^{(w)}_t = \sin(2\pi t/7)$, $\cos^{(w)}_t = \cos(2\pi t/7)$ and yearly seasonality $\sin^{(y)}_t = \sin(2\pi t/365)$, $\cos^{(y)}_t = \cos(2\pi t/365)$.
  \item A 2-state Markov promotion flag $\mathrm{promo}_t \in \{0,1\}$ with $\Pr(\mathrm{promo}_t{=}1 \mid \mathrm{promo}_{t-1}{=}0) = 0.06$ and $\Pr(\mathrm{promo}_t{=}1 \mid \mathrm{promo}_{t-1}{=}1) = 0.70$.
  \item A random-walk log-price $\log\!\mathrm{price}_t = \log\!\mathrm{price}_{t-1} + 0.02\,\xi_t$ plus an iid jitter $0.10\,\eta_t$, with $\xi_t, \eta_t \sim \mathcal{N}(0,1)$.
  \item An AR(1) latent factor $g_t = 0.95\, g_{t-1} + 0.10\, \zeta_t$, $\zeta_t \sim \mathcal{N}(0,1)$.
\end{itemize}
The coefficient vector is $\beta = (6,\ 0.20,\ -0.08,\ 0.35,\ -0.45,\ 0.08,\ -0.08,\ 0.15)$, and the NB dispersion is $n_t = 10$ on promotion days and $n_t = 15$ otherwise. Per-series random seeds are drawn deterministically from a SHA-256 hash of the tag \texttt{("cov", T, series\_id)}, so the panel is fully reproducible.

\paragraph{Oracle quantiles.}
The oracle in the validity experiment of Section~\ref{sec:experiments} issues the analytic Negative-Binomial $\alpha$-quantile $\hat q_t = F^{-1}_{\mathrm{NB}(n_t, p_t)}(\alpha)$ at $Q{=}99$ levels $\alpha \in \{0.01, 0.02, \ldots, 0.99\}$ per series. Since the conditional law is exactly $\mathrm{NegBin}(n_t, p_t)$, the oracle prediction matches the true conditional quantile and any rejection by the betting test counts as a Type-I error by design.

\paragraph{OCO Skeptic implementation.}
The AdaGrad, FTRL (with $\ell_1$ proximal regularisation), and SGD
updates used by the betting Skeptic are implemented via the
corresponding optimizers in the River streaming-machine-learning
package~\citep{montiel2021river}. The ONS Skeptic is implemented as
a custom subclass of River's base optimizer following the Online
Newton Step of \citet{hazan2007logarithmic}; we maintain the inverse
Hessian incrementally via a rank-1 Sherman--Morrison update. River's
own Newton optimizer is not used.

\subsection{Additional simulation results}
\label{app:additional_simulations}

\paragraph{Type-I validity for SGD and FTRL Skeptics.}
For completeness we report the two OCO optimisers omitted from
Figure~\ref{fig:validity}. The setup is identical: $N{=}1{,}000$
oracle series of length $T{=}1{,}460$, nominal level
$\alpha_{\text{test}}{=}0.05$, rejection threshold $1/\alpha{=}20$ from
Ville's inequality. Both optimisers track the $5\%$ Ville floor across
the full range of $\alpha$, matching the AdaGrad and ONS panels in
Figure~\ref{fig:validity}.

\begin{figure}[h!]
    \centering
    \includegraphics[width=\textwidth]{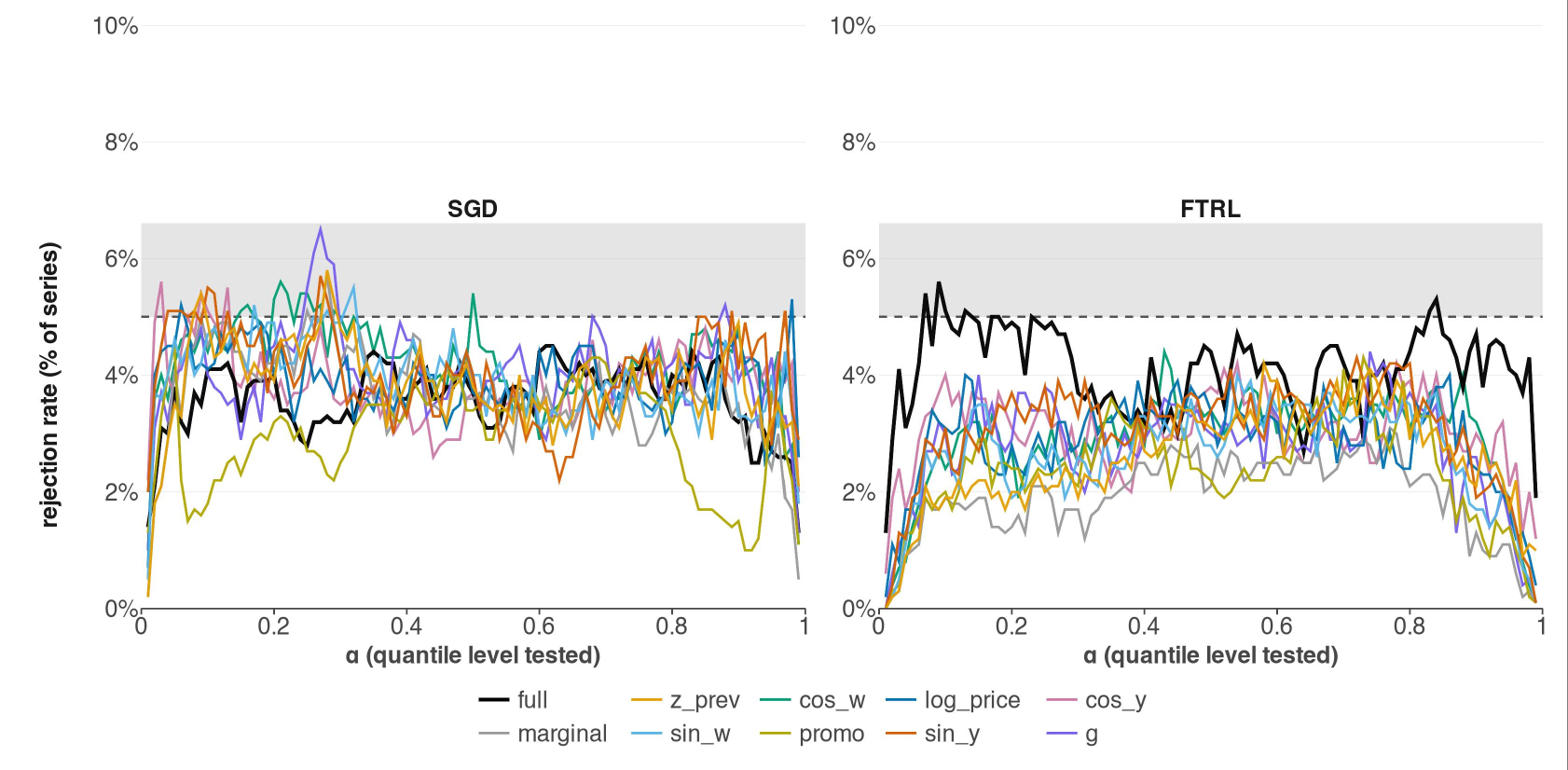}
    \caption{Type-I rejection rate vs.\ quantile~$\alpha$ for the SGD
    and FTRL Skeptics ($N{=}1{,}000$ oracle series, $T{=}1{,}460$,
    $\alpha_{\text{test}}{=}0.05$). Grey band: one-sided $99\%$ MC
    tolerance. Companion to Figure~\ref{fig:validity}.}
    \label{fig:validity-appendix}
\end{figure}

\clearpage

\paragraph{Per-feature rejection across context windows (Chronos-2).}
We complement Figure~\ref{fig:chronos_panel} (Chronos-2 with
a 365-day sliding window in the main body) with the same combined
Rossmann | Synthetic panels for the remaining context windows. The
top-1 hyperparameter per pipeline is auto-selected by mean rejection on
the \texttt{full} reference skeptic; emphasised single-feature skeptics
are the top-2 per panel (computed dynamically per cell), with the
remaining single-feature configurations drawn as faded grey landscape.
See Figure~\ref{fig:chronos_panel} for the full explanation
of the panel layout.

\begin{figure}[h!]
    \centering
    \includegraphics[width=0.95\textwidth]{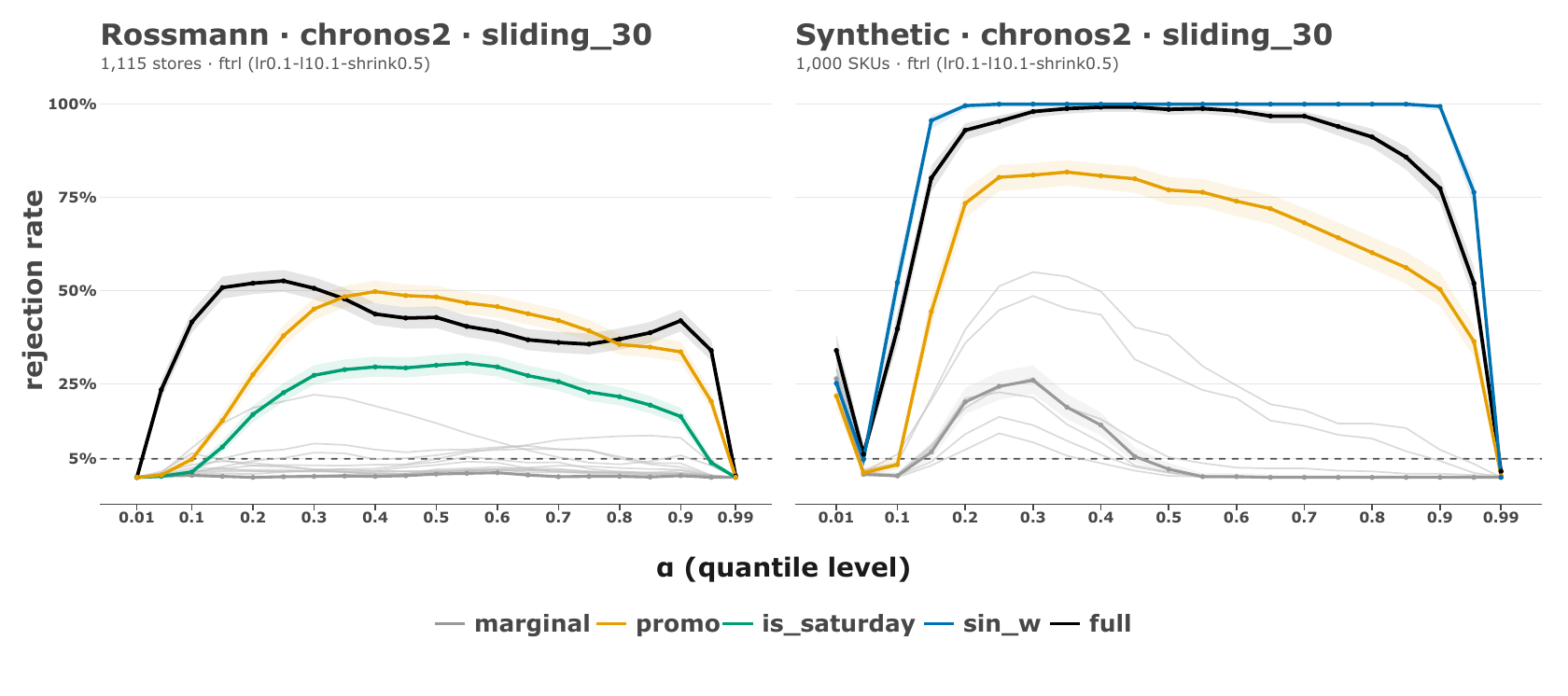}
    \caption{Per-feature rejection rate vs.\ quantile~$\alpha$ for
    Chronos-2 with a 30-day sliding window (FTRL betting on both
    pipelines). Companion to
    Figure~\ref{fig:chronos_panel}.}
    \label{fig:appx-panel-chronos2-30}
\end{figure}

\begin{figure}[h!]
    \centering
    \includegraphics[width=0.95\textwidth]{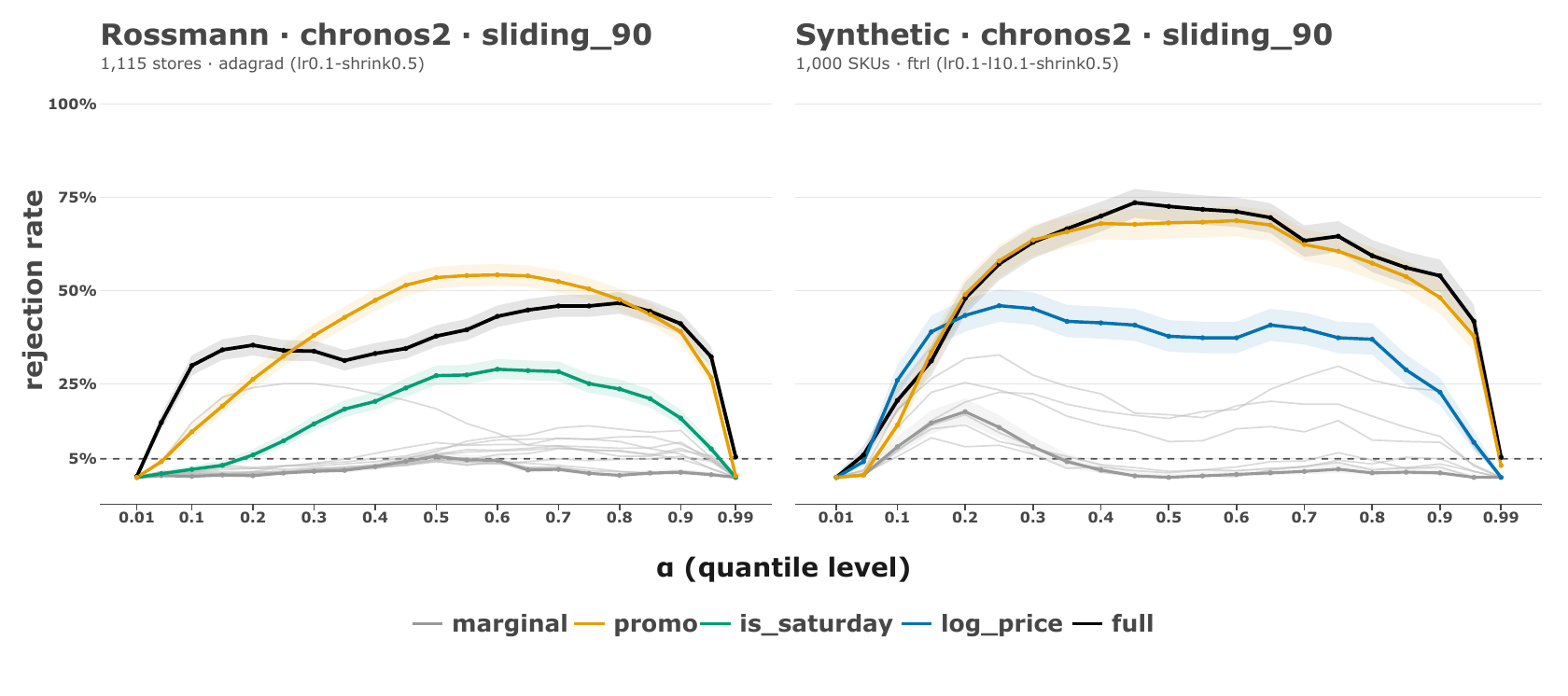}
    \caption{Per-feature rejection rate vs.\ quantile~$\alpha$ for
    Chronos-2 with a 90-day sliding window (Rossmann: AdaGrad;
    Synthetic: FTRL). Companion to
    Figure~\ref{fig:chronos_panel}.}
    \label{fig:appx-panel-chronos2-90}
\end{figure}

\begin{figure}[h!]
    \centering
    \includegraphics[width=0.95\textwidth]{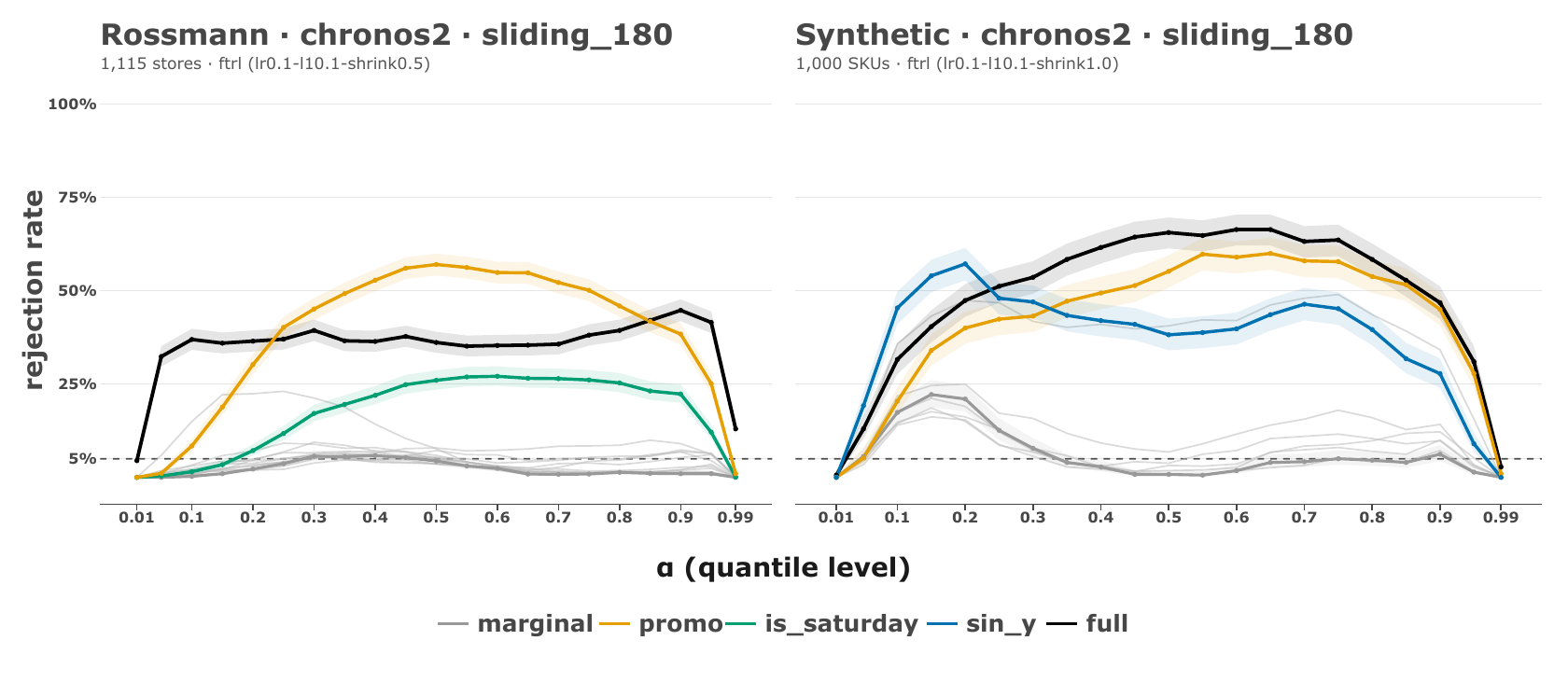}
    \caption{Per-feature rejection rate vs.\ quantile~$\alpha$ for
    Chronos-2 with a 180-day sliding window (FTRL betting on both
    pipelines). Companion to
    Figure~\ref{fig:chronos_panel}.}
    \label{fig:appx-panel-chronos2-180}
\end{figure}

\paragraph{Per-feature rejection: Moirai-2 forecaster.}
The same panel for the second foundation forecaster (Moirai-2) at the
matching 365-day sliding window. Moirai-2 attains substantially higher
rejection rates across all features, indicating heavier conditional
miscalibration; nevertheless the per-feature ordering remains
informative for diagnosis.

\begin{figure}[h!]
    \centering
    \includegraphics[width=0.95\textwidth]{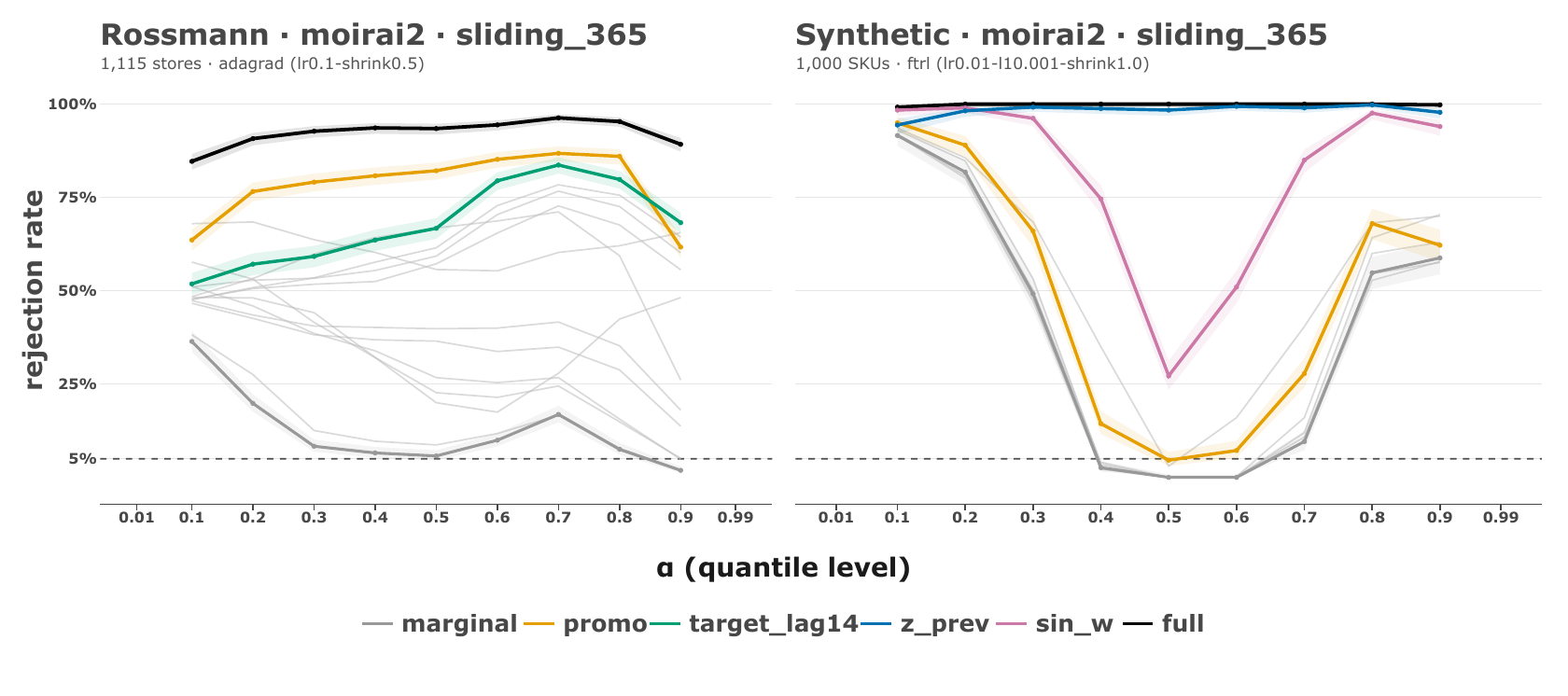}
    \caption{Per-feature rejection rate vs.\ quantile~$\alpha$ for
    Moirai-2 with a 365-day sliding window (Rossmann: AdaGrad;
    Synthetic: FTRL). Companion to
    Figure~\ref{fig:chronos_panel}.}
    \label{fig:appx-panel-moirai2-365}
\end{figure}

\clearpage

\paragraph{Hyperparameter sensitivity at the main-paper cell.}
We map the rejection-rate landscape across the full hyperparameter grid
for each OCO optimiser at the main-paper cell (Chronos-2 with a 365-day
sliding window) at the median quantile $\alpha{=}0.5$, where
rejection-rate signal-to-noise is best for ranking hyperparameters.
Rows are the $(\eta, \lambda_1, \mathrm{shrink})$ HP triples on disk;
columns are skeptic configurations (\texttt{full} reference plus each
single feature, restricted to the configurations covered consistently
across all three optimisers). Rows are sorted by descending row-mean
rejection rate, so the strongest HPs sit at the top of every panel.

\begin{figure}[h!]
    \centering
    \includegraphics[width=0.95\textwidth]{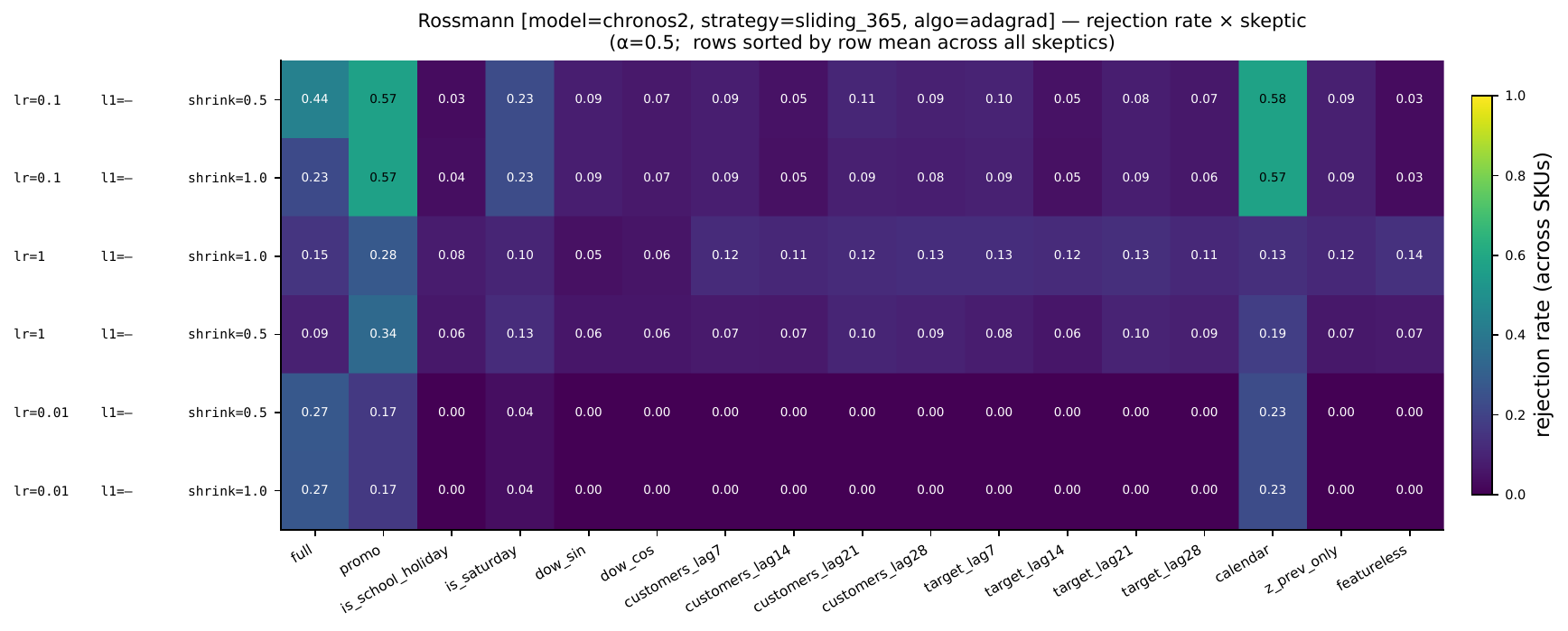}
    \caption{Rossmann HP sweep: AdaGrad, Chronos-2 with a 365-day
    sliding window, $\alpha{=}0.5$.}
    \label{fig:appx-algo-rossmann-adagrad}
\end{figure}

\begin{figure}[h!]
    \centering
    \includegraphics[width=0.95\textwidth]{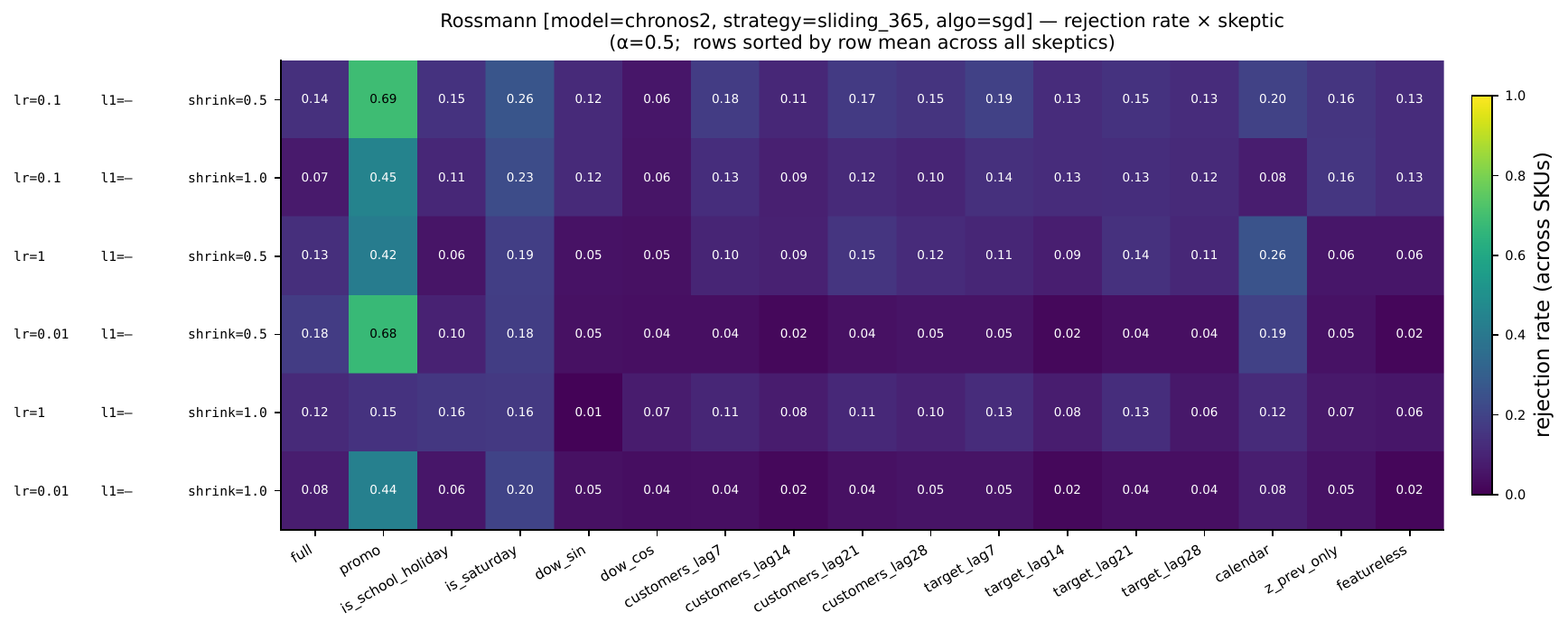}
    \caption{Rossmann HP sweep: SGD, Chronos-2 with a 365-day sliding
    window, $\alpha{=}0.5$.}
    \label{fig:appx-algo-rossmann-sgd}
\end{figure}

\begin{figure}[h!]
    \centering
    \includegraphics[width=0.95\textwidth]{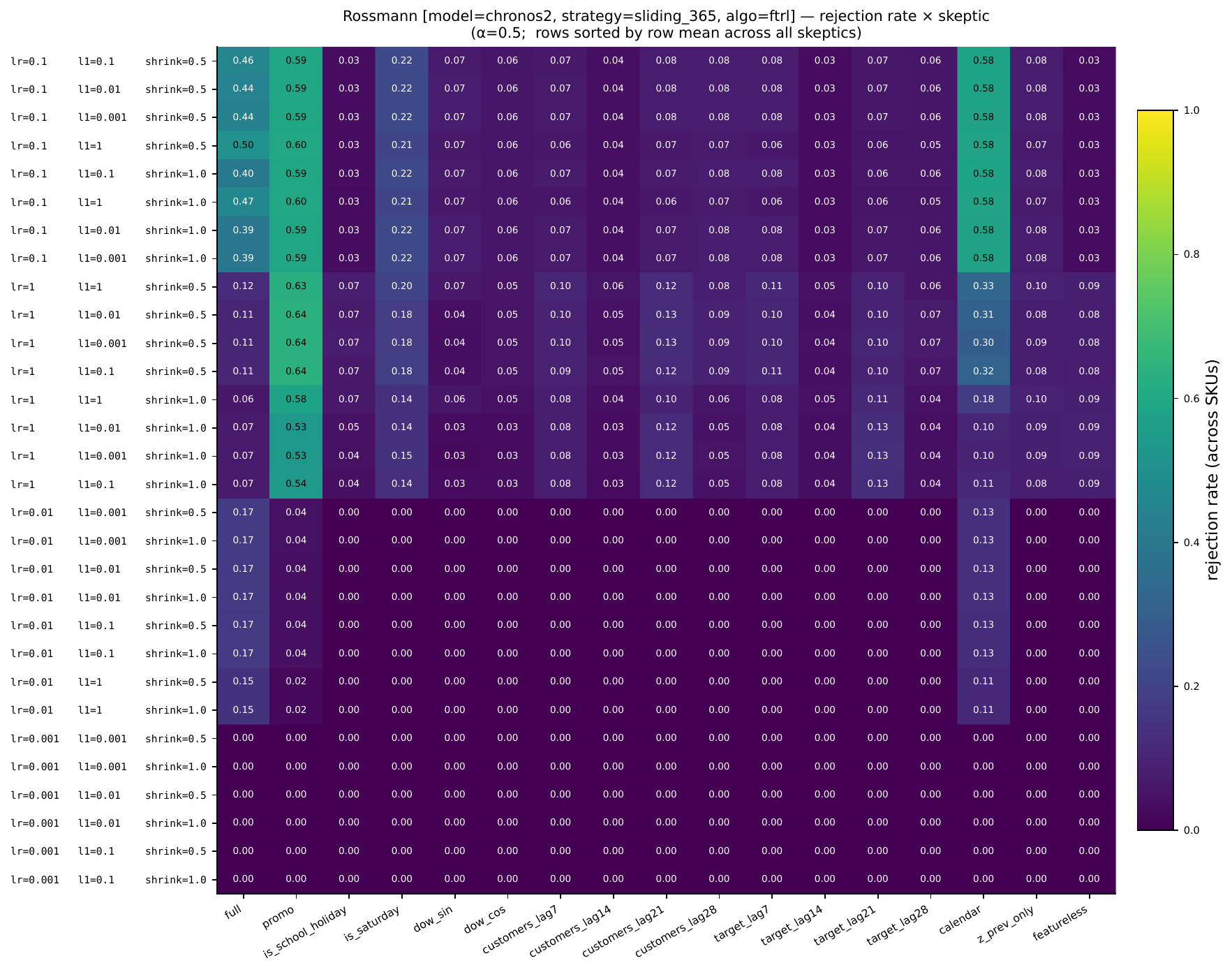}
    \caption{Rossmann HP sweep: FTRL, Chronos-2 with a 365-day sliding
    window, $\alpha{=}0.5$. The hyperparameter highlighted in the main
    paper sits in the upper rows of this panel.}
    \label{fig:appx-algo-rossmann-ftrl}
\end{figure}

\clearpage

\begin{figure}[h!]
    \centering
    \includegraphics[width=0.95\textwidth]{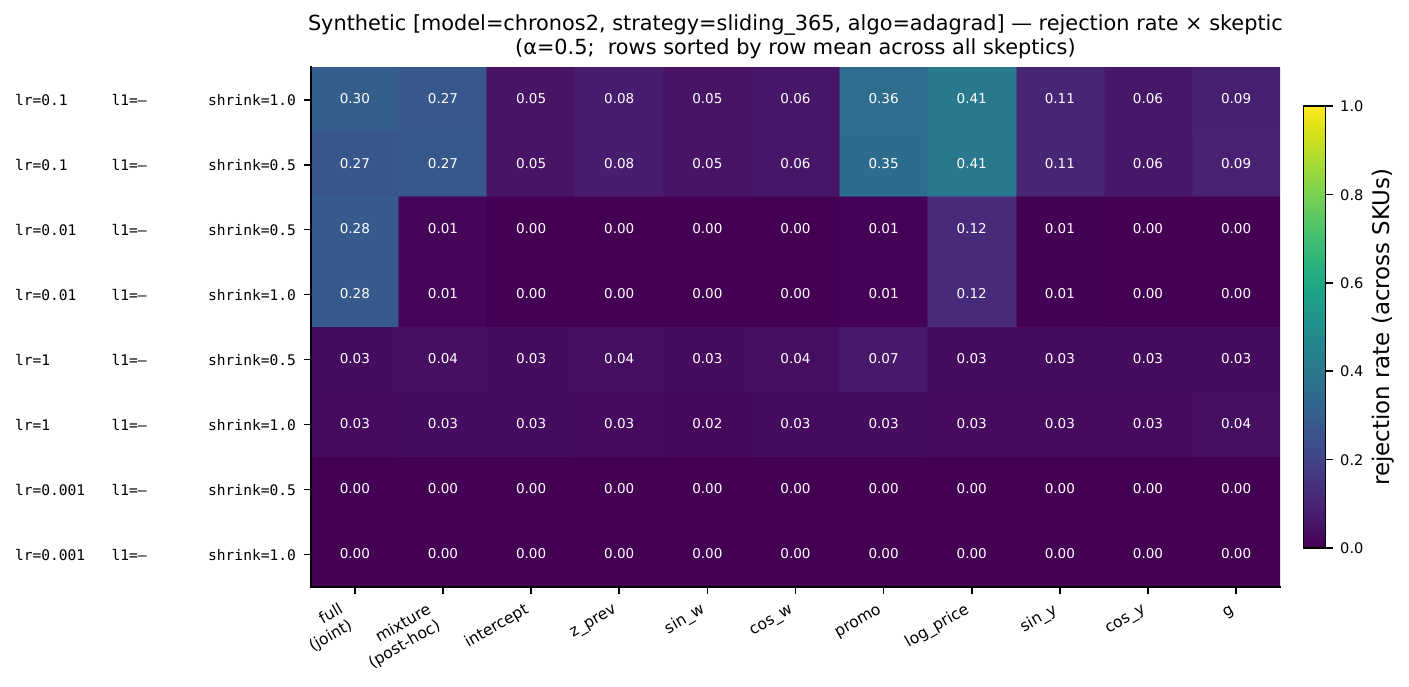}
    \caption{Synthetic HP sweep: AdaGrad, Chronos-2 with a 365-day
    sliding window, $\alpha{=}0.5$.}
    \label{fig:appx-algo-synthetic-adagrad}
\end{figure}

\begin{figure}[h!]
    \centering
    \includegraphics[width=0.95\textwidth]{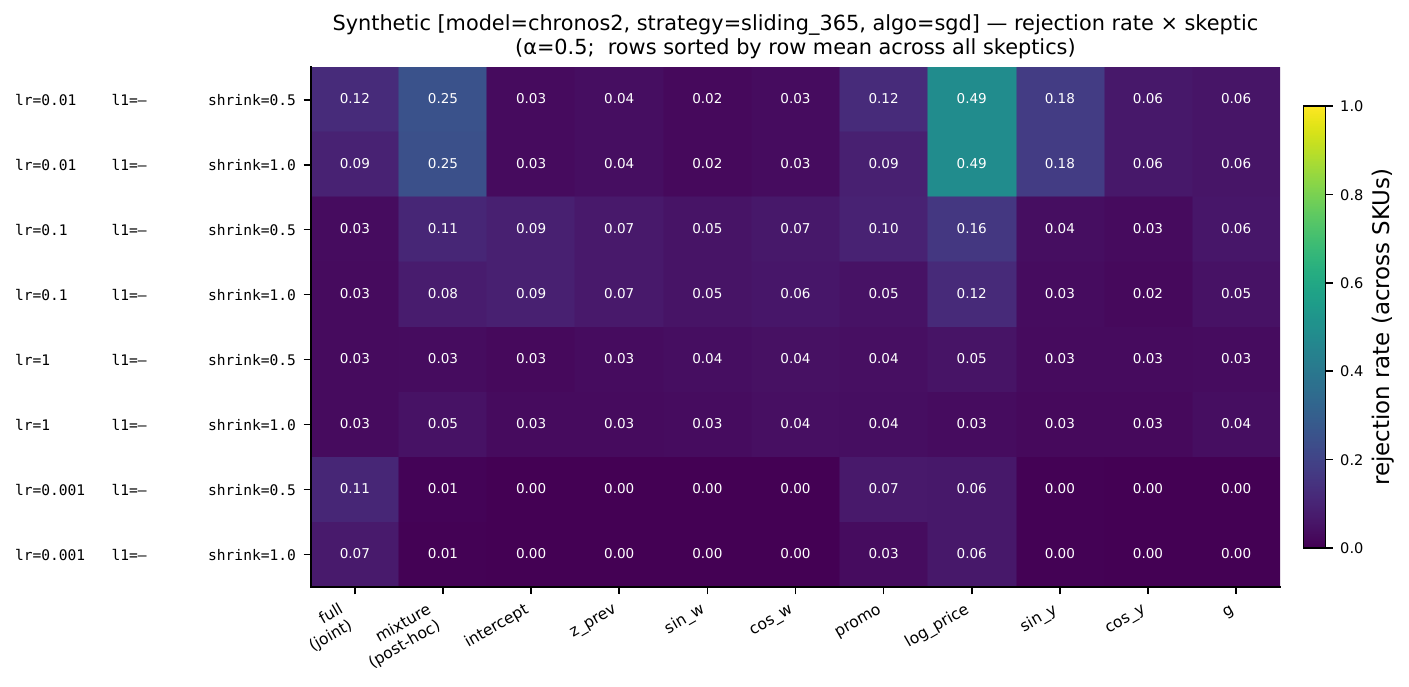}
    \caption{Synthetic HP sweep: SGD, Chronos-2 with a 365-day sliding
    window, $\alpha{=}0.5$.}
    \label{fig:appx-algo-synthetic-sgd}
\end{figure}

\begin{figure}[h!]
    \centering
    \includegraphics[width=0.95\textwidth]{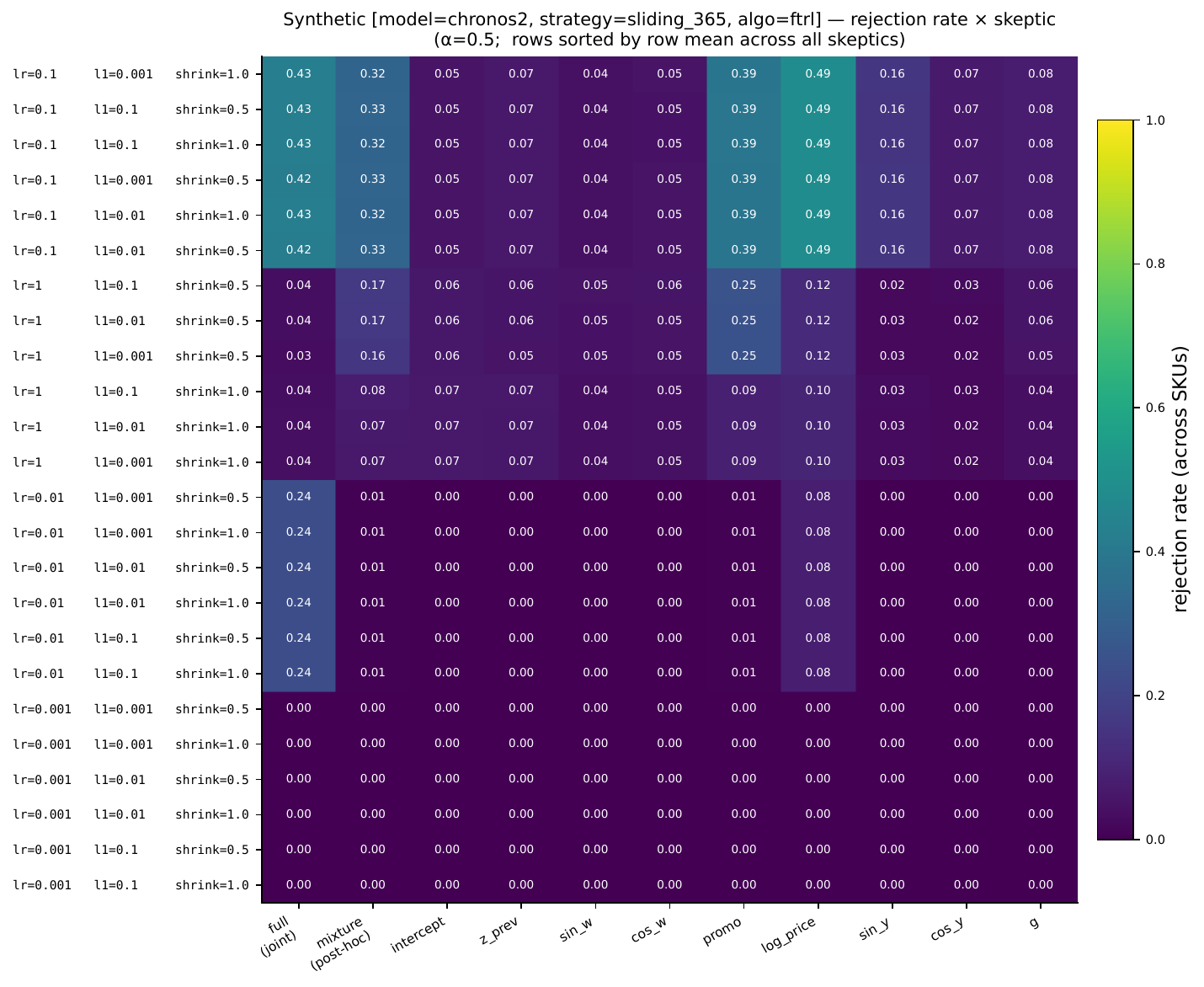}
    \caption{Synthetic HP sweep: FTRL, Chronos-2 with a 365-day sliding
    window, $\alpha{=}0.5$. The hyperparameter highlighted in the main
    paper sits in the upper rows of this panel.}
    \label{fig:appx-algo-synthetic-ftrl}
\end{figure}

\clearpage

\subsection{Computational considerations}
\label{app:computational}

Forecaster quantile predictions (Chronos-2, Moirai-2) were generated on
a single workstation with one NVIDIA GeForce RTX 4090 GPU; predictions
are cached as Parquet files and reused across all betting experiments.
Betting tests, validity runs, and Monte Carlo heatmaps are CPU-only and
run via a process pool of $\sim$8 workers on a consumer multi-core CPU.
End-to-end runtime per HP bundle is on the order of minutes; the full
HP sweep across both pipelines and all $(\text{model}, \text{strategy},
\text{optimiser})$ cells totals several hundred CPU-hours. Preliminary
and exploratory sweeps not reported in the paper consumed a similar
order of magnitude of compute.

\section*{AI Usage}
We used an LLM (GPT v5.5) as an assistive tool for: (i) improving exposition via language editing; and (ii) aiding understanding of theoretical concepts. All LLM-generated suggestions were independently verified by the co-author team.

\end{document}